\newtheorem{theorem}{Theorem}[section]
\newtheorem{definition}[theorem]{Definition}
\newtheorem{lemma}[theorem]{Lemma}
\newtheorem{proposition}[theorem]{Proposition}
\newtheorem{assumption}{Assumption}[section]
\newcommand{\bR}{\mathbb{R}}    
\newcommand{\bE}{\mathbb{E}}    
\newcommand{\cN}{\mathcal{N}}    
\newcommand{\cL}{\mathcal{L}}    
\newcommand{\cM}{\mathcal{M}}
\newcommand{\relgrad}{\Psi}
\newcommand{\grad}{{\mathrm{grad}}}
\newcommand{\stiefelG}[1]{{\mathrm{St}_{#1}(p, n)}}
\newcommand{\stiefelGeps}[1]{{\mathrm{St}^\varepsilon_{#1}(p, n)}}
\newcommand{\lagrM}{\lambda}
\newcommand{\rD}{\mathrm{D}}
\newcommand{\normCoef}{\omega}
\newcommand{\Retr}{\mathrm{Retr}}
\newcommand{\constH}{C_h}
\newcommand{\constHlow}{\bar C_h}
\newcommand{\constRgrad}{C_\relgrad}
\newcommand{\constLagrM}{C_\lambda}
\newcommand{\var}{\gamma}
\newcommand{\sk}{{\mathrm{skew}}}
\newcommand{\sym}{{\mathrm{sym}}}
\newcommand{\inner}[2]{\left\langle#1,\,#2\right\rangle} 
\newcommand{\bigO}{\mathcal{O}} 
\newcommand{\tol}{e}
\DeclareMathOperator{\Tr}{Tr} 
\newcommand{\Rev}[1]{{#1}} 
\icmltitlerunning{Optimization without Retraction on the Random Generalized Stiefel Manifold}
\begin{document}

\twocolumn[
\icmltitle{Optimization without Retraction on the Random Generalized Stiefel Manifold}



\icmlsetsymbol{equal}{*}

\begin{icmlauthorlist}
\icmlauthor{Simon Vary}{uclouvain,ox}
\icmlauthor{Pierre Ablin}{apple}
\icmlauthor{Bin Gao}{cas}
\icmlauthor{P.-A. Absil}{uclouvain}
\end{icmlauthorlist}

\icmlaffiliation{uclouvain}{ICTEAM Institute, UCLouvain, Louvain-la-neuve, Belgium}
\icmlaffiliation{ox}{Department of Statistics, University of Oxford, Oxford, United Kingdom}
\icmlaffiliation{apple}{Apple Machine Learning Group, Paris, France}
\icmlaffiliation{cas}{Academy of Mathematics and Systems Science, Chinese Academy of Sciences, Beijing, China}

\icmlcorrespondingauthor{Simon Vary}{simon.vary@stats.ox.ac.uk}

\icmlkeywords{Optimization, Manifold Constraints, Stochastic optimization, CCA, Generalized Eigenvalue Problem, Generalized Stiefel Manifold}

\vskip 0.3in
]



\printAffiliationsAndNotice{}  

\begin{abstract}
    Optimization over the set of matrices $X$ that satisfy $X^\top B X = I_p$, referred to as the generalized Stiefel manifold, appears in many applications involving sampled covariance matrices such as the canonical correlation analysis (CCA), independent component analysis (ICA), and the generalized eigenvalue problem (GEVP). Solving these problems is typically done by iterative methods that require a fully formed $B$. We propose a cheap stochastic iterative method that solves the optimization problem while having access only to random estimates of $B$. Our method does not enforce the constraint in every iteration; instead, it produces iterations that converge to critical points on the generalized Stiefel manifold defined in expectation. The method has lower per-iteration cost, requires only matrix multiplications, and has the same convergence rates as its Riemannian optimization counterparts that require the full matrix $B$. Experiments demonstrate its effectiveness in various machine learning applications involving generalized orthogonality constraints, including CCA, ICA, and the GEVP.
\end{abstract}

\section{Introduction\label{sec:introduction}}
Many problems in machine learning and engineering, including canonical correlation analysis (CCA) \citep{Hotelling1936Relations}, independent component analysis (ICA) \citep{Comon1994Independent}, linear discriminant analysis \citep{McLachlan1992Discriminant}, and the generalized eigenvalue problem (GEVP) \citep{Saad2011Numerical}, can be formulated as the following optimization problem: 
\begin{equation}
    \begin{gathered}
    \min_{X\in \stiefelG{B}} f(X):=\bE[f_{\xi}(X)],\quad\mathrm{s.\,t.}\quad B = \mathbb{E}[B_\zeta], \\
   \stiefelG{B} := \left\{X\in\bR^{n\times p} \mid X^\top B X=I_p \right\} \label{eq:optimization_generalized_stiefel}
    \end{gathered}
\end{equation}
where the objective function $f$ is the expectation of $L$-smooth functions $f_\xi$, $B\in\mathbb{R}^{n\times n}$ is a positive-definite matrix, and $\xi, \zeta$ are independent~random variables. The individual random matrices $B_\zeta$ are only assumed to be positive semidefinite. The feasible set $\stiefelG{B}\subset\bR^{n\times p}$ defines a smooth manifold referred to as the \emph{generalized Stiefel manifold}.

In the deterministic case, when we have access to the matrix~$B$, the optimization problem can be solved by Riemannian techniques \citep{Absil2008Optimization, Boumal2023introduction}. Riemannian methods produce a sequence of iterates belonging to the set $\stiefelG{B}$, often by repeatedly applying a \emph{retraction} that maps tangent vectors to points on the manifold. In the case of $\stiefelG{B}$, retractions require non-trivial linear algebra operations such as eigenvalue or Cholesky decomposition. 
On the other hand, optimization on $\stiefelG{B}$ also lends itself to infeasible optimization methods, such as the augmented Lagrangian method. Such methods are typically employed in deterministic setting when the feasible set does not have a convenient projection, e.g., it lacks a closed-form expression or it requires solving an expensive optimization problem~\citep{Bertsekas1982Constrained}. Infeasible approaches produce iterates that do not belong to the feasible set but converge to it by solving a sequence of unconstrained optimization problems. However, solving the optimization subproblems in each iteration might be computationally expensive and the methods are sensitive to the choice of hyper-parameters, both in theory and in practice.

In this paper, unlike in the aforementioned areas of study, we consider the setting \eqref{eq:optimization_generalized_stiefel} where the feasible set itself is \emph{stochastic}, i.e.,\ the matrix $B$ is unknown and is an expectation of random estimates $B_\zeta$, for which neither Riemannian methods nor infeasible optimization techniques are well-suited. In particular, we are interested in the case where we only have access to i.i.d.~samples from $\xi$ and $\zeta$, and not to the full function $f$ and matrix $B$. 

\begin{figure}[t]
    \tiny
    \centering
    \begin{tikzpicture}[scale=.55]
	\definecolor{tikzblue}{RGB}{74,125,179}
	\definecolor{tikzred}{RGB}{209,53,43}
		
	\fill[gray!10] (5,-2) -- (7,2) -- (7,2) arc (60:120:14) -- (-7,2) -- (-5,-2) -- (5,-2) arc (60:120:10);
	\fill[white] (5,-2) -- (5,-2) arc (60:120:10) -- (5,-2);
		
	\coordinate [fill=black,inner sep=.8pt,circle,label=180:{$X$}] (X) at (-2.5,3.3);
	\coordinate [label=0:{\color{tikzblue}$-\Psi_{\xi, \zeta, \zeta'}$}] (G) at (-0.55,3.56);
	\coordinate [label=180:{\color{tikzred}$-\omega\nabla\mathcal{N}_{\zeta, \zeta'}$}] (N) at (-2.3,2.2);
	\coordinate [label=0:{$-\Lambda_{\xi, \zeta, \zeta'}$}] (L) at ($(G)-(X)+(N)-(X)+(X)$);
				
	\draw[-{stealth[black]},black,thick] (X) --  (L);
	\draw[-{stealth[tikzblue]},tikzblue,thick] (X) --  (G);
	\draw[-{stealth[tikzred]},tikzred,thick] (X) --  (N);
	\draw[dash dot] (G) -- (L);
	\draw[dash dot] (N) -- (L);
				

        \coordinate [label=180:{$\mathrm{St}_{B_\zeta}(p,n)$}] (noise) at (-6.2,0.6);
	\draw[dash dot dot,thick] (-6.2,0.4) .. controls (-4,-1) and (-3,3.5) .. (-1,1);
	\draw[dash dot dot,thick] (-1,1) .. controls (0,0) and (1,3) .. (3,1);
	\draw[dash dot dot,thick] (3,1) .. controls (4,-0.2) and (5,4) .. (5.8,-0.4);
		
	\coordinate [label=-135:{$\mathrm{St}_B(p,n)$}] (Stiefel) at (-5.8,0.2);
	\coordinate [label=180:{\color{gray}$\stiefelGeps{B}$}] (eps) at (5.5,-0.8);
	\draw[thick] (6,0) arc (60:120:12);
	\draw[dashed] (7,2) arc (60:120:14);
	\draw[dashed] (5,-2) arc (60:120:10);
    \end{tikzpicture}
    \caption{Illustration of the landing field and the random feasible~set. \label{fig:diagram_landing}}
\end{figure}
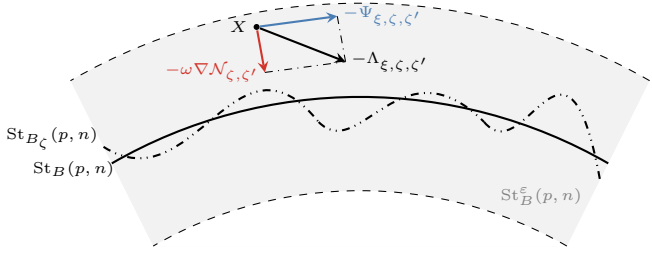

We design an iterative \emph{landing} method requiring only random estimates $B_\zeta$ that provably converges to critical points of \eqref{eq:optimization_generalized_stiefel} while performing only matrix multiplications. The main principle of the method is depicted in the diagram in Figure~\ref{fig:diagram_landing}. It is inspired by a recent line of work that first considered the orthogonal group $\mathrm{St}_{I_n}(n,n)$~\cite{Ablin2022Fast} and was later extended to the Stiefel manifold $\mathrm{St}_{I_n}(p,n)$~\cite{Gao2022Optimization,Ablin2023Infeasible,Schechtman2023Orthogonal}. Instead of performing retractions after each iteration, the proposed algorithm performs an update along the sum of two orthogonal 
vectors---one is an unbiased estimator of a \emph{relative ascent direction} (a concept defined in Section~\ref{sec:landing}) and the other is an unbiased estimator of a direction towards $\mathrm{St}_B(p,n)$. The algorithm does not enforce the constraint in every iteration; instead it produces iterations that remain within an initially prescribed $\varepsilon$-safe region, and finally ``lands'' on, i.e.,~converges to, the manifold. 

Specifically, the proposed stochastic landing iteration for solving \eqref{eq:optimization_generalized_stiefel} is the simple, cheap, and stochastic update rule 
\begin{equation}
    \begin{gathered}
    X^{k+1}= X^k - \eta_k \Lambda_{\xi^k, \zeta^k, \zeta'^k}(X^k) \\
    \text{with}\quad \Lambda_{\xi, \zeta, \zeta'}(X)=\relgrad_{\xi, \zeta, \zeta'}(X)+\omega\nabla\cN_{\zeta, \zeta'}(X),
    \end{gathered}\label{eq:landing_generalized_stiefel}
\end{equation}
in $\bR^{n\times p}$ whose two components are
\begin{equation*}
    \begin{aligned}
        \relgrad_{\xi, \zeta, \zeta'}(X) &= 2\,\sk\left( \nabla f_\xi(X) X^\top B_\zeta\right) B_{\zeta'}X, \\
    \nabla \cN_{\zeta, \zeta'}(X) &= 2B_{\zeta'} X \left( X^\top B_{\zeta} X - I_p \right),
    \end{aligned}
\end{equation*}
where $\omega>0$, $\nabla \cN_{\zeta, \zeta'}(X)$ is an unbiased stochastic estimator of the gradient of $\cN(X) = \frac12 \| X^\top B X - I_p\|_{\mathrm{F}}^2$, and $\sk(A) = (A-A^\top)/2$. 
The above landing field formula~\eqref{eq:landing_generalized_stiefel} applies in the general case when both the function $f$ and the constraint matrix $B$ are stochastic; the deterministic case is recovered by substituting $\nabla f_\xi = \nabla f$ and $B_\zeta =B_{\zeta'} = B$. Note that in many applications of interest, $B_\zeta = \sum_{i=1}^r x_ix_i^\top / r$ is a subsampled covariance matrix with batch-size $r$, that is of rank at most $r$ when $r\leq n$. 
Unlike retractions, the landing method benefits in this setting since the cost of multiplication by $B_\zeta$, which is the dominant cost of \eqref{eq:landing_generalized_stiefel}, becomes $\bigO(npr)$ instead of $\bigO(n^2p)$ where $r$ is the batch size. The landing method never requires to form the matrix $B$, thus having space complexity defined by only saving the iterates and the samples: $\bigO\left(n(p+r)\right)$ instead of $\bigO(n^2)$.

We prove that the landing iteration converges to $\tol$-critical points, i.e., points $X$ such that $\|\mathrm{grad}f(X)\|\leq e$ (where $\mathrm{grad}f$ denotes the Riemannian gradient defined in~\eqref{eq:riemannian_gradient}) and $\|\cN(X)\|\leq \tol$, with a fixed step-size in the deterministic case (Theorem~\ref{thm:landing_convergence}) and with a decaying step-size in the stochastic case (Theorem~\ref{thm:landing_stochastic_convergence}), with a rate that matches that of deterministic \citep{Boumal2019Global} and stochastic \Rev{\citep[Theorem 5, Sec.\ B]{Zhang2016Riemannian}} Riemannian gradient descent on $\stiefelG{B}$. The advantages of the landing field in~\eqref{eq:landing_generalized_stiefel} are that i) its  computation involves only parallelizable matrix multiplications, which is cheaper than the computations of the Riemannian gradient and retraction and ii) it handles gracefully the stochastic constraint, while Riemannian approaches need form the full estimate of  $B$. 

While the presented theory holds for a general smooth, possibly non-convex objective $f$, a particular problem that can be either phrased as \eqref{eq:optimization_generalized_stiefel} or framed as an optimization over the product manifold of two $\stiefelG{B}$ is CCA, which is a widely used technique for measuring similarity between datasets \citep{Arora2017Stochastic}. CCA aims to find the $p$-dimensional subspaces $X,Y\in\bR^{n\times p}$ on which the projections of the two zero-centered datasets $D_1=(d_1^1, \dots, d_1^N), D_2=(d_2^1, \dots, d_2^N)\in\bR^{n\times N}$ of $N$ i.i.d.\ samples are maximally correlated
\begin{equation}
    \begin{gathered}
    \min_{X,Y\in\bR^{n\times p}} \bE_{i}\left[-\Tr(X^\top d_1^i(d_2^i)^\top Y)\right] \\
        X^\top \bE_{i} [d_1^i(d_1^i)^\top] X = I_p \text{ and }Y^\top \bE_{i}
        [d_2^i(d_2^i)^\top] Y = I_p,
    \end{gathered}\label{eq:optimization_cca}
\end{equation}
where the expectations are w.r.t.\ the uniform distribution over $\{1,\ldots, N\}$. Here, the constraint matrices $B_\zeta$ correspond to individual or mini-batch sample covariances, and the constraints are that the two matrices $X,Y\in\bR^{n\times p}$ are in the generalized Stiefel manifold. The proposed landing method is able to solve \eqref{eq:optimization_cca} while only having a stochastic estimate of the covariance matrices.

The rest of the introduction gives a brief overview of the current optimization techniques for solving \eqref{eq:optimization_generalized_stiefel} and its forthcoming generalization \eqref{eq:optimization_problem} when the feasible set is deterministic, since we are not aware of existing techniques for \eqref{eq:optimization_generalized_stiefel} with stochastic feasible set. Afterwards, the paper is organized as follows:
\begin{itemize}[leftmargin=1em]
    \item In Section~\ref{sec:landing} we give a form to a generalized landing algorithm for solving a smooth optimization problem $\min_{x\in\cM} f(x)$ on a smooth manifold $\cM$ defined below in ~\eqref{eq:optimization_problem}. Under suitable conditions, the algorithm converges to critical points with the same sublinear rate, $\mathcal{O}(1/K)$, as its Riemannian counterpart \citep{Boumal2019Global}, see Theorem~\ref{thm:landing_convergence}. Unlike in \citet{Schechtman2023Orthogonal}, our analysis is based on a smooth merit function allowing us to obtain a convergence result for the stochastic variant of the algorithm, when having an unbiased estimator for the landing field, see Theorem~\ref{thm:landing_stochastic_convergence}.
    \item In Section~\ref{sec:generalized_stiefel} we build on the general theory developed in the previous section and prove that the update rule for the generalized Stiefel manifold in~\eqref{eq:landing_generalized_stiefel} converges to critical points of \eqref{eq:optimization_generalized_stiefel}, both in the deterministic case with the rate $\bigO(1/K)$, and in expectation with the rate $\bigO(1/\sqrt{K})$ in the case when both the gradient of the objective function \emph{and} the feasible set are stochastic estimates.
    \item In Section~\ref{sec:numerics} we numerically demonstrate the efficiency of the proposed method on a deterministic example of solving a generalized eigenvalue problem,  stochastic CCA and ICA.
\end{itemize}
 
\textbf{Notation and terminology.} We denote vectors by lowercase letters $x,y,z,\ldots$, matrices with uppercase letters $X,Y,Z,\ldots$, and $I_n$ denotes the $n\times n$ identity matrix. We let $\beta_i$ denote the $i^{th}$ eigenvalue of $B$ and $\kappa_B=\beta_1/\beta_n$ the condition number of $B$. Let $\rD f(x)[v] = \lim_{t\rightarrow 0}(f(x+t v) - f(x))/t$ denote the derivative of $f$ at $x$ along $v$. We let $\| \cdot \|$ denote the $\ell_2$-norm also termed Frobenius norm for matrices, whereas $\| \cdot\|_2$ denotes the operator norm induced by $\ell_2$-norm. We denote the Frobenius inner product as $\inner{\cdot\,}{\cdot}$, with respect to which we define the adjoint of a linear operator $\mathrm{A}[v]$ denoted by $\mathrm{A}^*[w]$. We say that a function $f:\mathbb{R}^{n}\rightarrow \mathbb{R}$ is $L_f$-smooth if it is continuously differentiable and its gradient is Lipschitz continous with Lipschitz constant $L_f$, i.e.,\ $\|\nabla f(x) - \nabla f(y) \|_2 \leq L_f \| x-y \|_2$, for all $x, y\in\bR^n$.

\subsection{Prior work related to optimization on the generalized Stiefel manifold}

\paragraph{Riemannian optimization.}
A widely used approach to solving optimization problems constrained to manifolds as in \eqref{eq:optimization_problem} are the techniques from Riemannian optimization. These methods are based on the observation that smooth sets can be locally approximated by a linear subspace, which allows to extend classical Euclidean optimization methods, such as gradient descent and the stochastic gradient descent, to the Riemannian setting. For example, Riemannian gradient descent iterates $x^{k+1} = \Retr_\cM(x^k, -\eta_k \grad f(x^k))$, where $\eta_k>0$ is the step-size at iteration $k$, $\grad f(x^k)$ is the Riemannian gradient that is computed as a projection of $\nabla f(x^k)$ on the tangent space of $\cM$ at $x^k$, and $\Retr$ is the \emph{retraction} operation, which maps the updated iterate along the direction $- \eta_k \grad f(x^k)$ onto the manifold and is accurate up to the first-order, i.e.,  $\Retr_\cM(x, d) = x + d + o(\|d\|)$. Retractions allow the implementation of Riemannian counterparts to classical Euclidean methods on the generalized Stiefel manifold, such as Riemannian (stochastic) gradient descent \citep{Bonnabel2013Stochastic,Zhang2016Firstorder}, trust-region methods \citep{Absil2007TrustRegion}, and accelerated methods \citep{Ahn2020Nesterov}; for an overview, see \citet{Absil2008Optimization, Boumal2023introduction}. 

\begin{table}[b]
    \setlength{\tabcolsep}{4pt}
    \centering
    \scriptsize
    \begin{tabular}{lc c }
    	\toprule
         & Matrix factorizations &  Complexity \\\midrule
         Polar &  matrix inverse square root & $\bigO(n^2p)$ \\
         SVD-based & SVD & $\bigO(n^2p)$\\
         Cholesky-QR  & Cholesky, matrix inverse &  $\bigO(n^2p)$\\
         $\Lambda(X)$ formula in \eqref{eq:landing_generalized_stiefel} & None  & $\min\{ \bigO(n^2p), \bigO(npr)\}$  \\
         \bottomrule
    \end{tabular}
    \caption{Cost comparison of retractions and the landing formula on the generalized Stiefel manifold. We assume naive flop count for the matrix-matrix multiplication and no additional structure on matrix $B_\zeta$ apart from being rank-$r$ in the stochastic setting. The matrices are of size $n\times p$ with $p \leq n$, and $r$ is the rank of the stochastic matrices $B_\zeta$. Matrix factorizations are hard to parallelize. The retractions do not allow for reduced complexities when $B_\zeta$ is low-rank and are not suited for stochastic $B_\zeta$. For the numerical timings, see Figure~\ref{fig:3_cost_retractions_cuda} in the appendices.}\label{tab:costs_operations}
\end{table}

There are several ways to compute a retraction to the generalized Stiefel manifold, which we summarize in Table~\ref{tab:costs_operations} and we give a more detailed explanation in \autoref{app:retractions}. Overall, we see  that the landing field~\eqref{eq:landing_generalized_stiefel} is much cheaper to compute than all these retractions in two cases: i) when $n \simeq p$, then the bottleneck in the retractions becomes the matrix factorizations, which, although they are of the same complexity as matrix multiplications, are much more expensive and hard to parallelize,
ii) when $n\gg p$, the dominant cost of all retractions lies in matrix multiplications that require in practice $\mathcal{O}(n^2p)$, whereas the use of the batches of size $r$ mentioned above allows computing the landing field in $\mathcal{O}(npr)$. We demonstrate numerically the practical cost of computing retractions in Figure~\ref{fig:3_cost_retractions_cuda} in the appendices.

\paragraph{Infeasible optimization methods.}
A popular approach for solving constrained optimization is to employ the squared $\ell_2$-penalty method by adding the $\omega \cN(X)$ regularizer to the objective. However, unlike the landing method, the iterates of the squared penalty method do not converge to the feasible set for any fixed choice of $\omega$ and converge only when $\omega$ goes to $\infty$ \citep{Nocedal2006Numerical}. In contrast, the landing method provably converges to the feasible set for any fixed $\omega>0$, which is enabled by the structure of the landing field~\eqref{eq:landing_generalized_stiefel} as the sum of two orthogonal components, the second one being the gradient of the infeasibility measure $\mathcal{N}$.

Augmented Lagrangian methods seek to solve a deterministic minimization problem with an augmented Lagrangian function $\cL(x,\lambda)$, such as the one introduced later in \eqref{eq:fletcher_lagrangian}, by updating the solution vector $x$ and the vector of Lagrange multipliers $\lambda$ respectively \citep{Bertsekas1982Constrained}. This is typically done by solving a sequence of optimization problems of $\cL(\cdot, \lambda_k)$ followed by a first-order update of the multipliers $\lambda_{k+1} = \lambda_k - 2 \beta h(x^k)$ depending on the penalty parameter $\beta$. The iterates are gradually pushed towards the feasible set by increasing the penalty parameter $\beta$. However, each optimization subproblem may be expensive, and the methods are sensitive to the choice of the penalty parameter~$\beta$.

Recently, a number of works explored the possibility of infeasible methods for optimization on Riemannian manifolds, when the feasible set is deterministic, in order to eliminate the cost of retractions, which can be limiting in some situations, e.g.,~when the evaluation of stochastic gradients of the objective is cheap. The works of \citet{Gao2019Parallelizable, Gao2022orthogonalization} proposed a modified augmented Lagrangian method which allows for fast computation and better bounds on the penalty parameter $\beta$. \citet{Ablin2022Fast} designed a simple iterative method called \emph{landing}, consisting of two orthogonal components, to be used on the orthogonal group, which was later expanded to the Stiefel manifold \citep{Gao2022Optimization,Ablin2023Infeasible}. \citet{Schechtman2023Orthogonal} expanded the \emph{landing} approach to be used on a general smooth constraint using a non-smooth merit function. More recently, \citet{Goyens2024Computing} analysed the classical Fletcher's augmented Lagrangian for solving smoothly constrained problems through the Riemannian perspective and proposed an algorithm that provably finds second-order critical points of the minimization problem. As the differentiability of the infeasible models relies on the second-order information of the objective, \citet{Xiao2023dissolving} proposed a constraint-dissolving model where the exact gradient and Hessian are convenient to compute. 

\subsection{Existing methods for the GEVP and CCA}
\begin{table*}[h]
    \setlength{\tabcolsep}{10pt}
    \centering
    \scriptsize
    \centerline{
    \begin{tabular}{lcc l l l}
    \toprule
         &  Stochastic & Matrix factorizations & Convergence & Per-iteration complexity  & Memory \\ \midrule
    AppGrad \citep[Theorem 2.1]{Ma2015Finding}   & - & SVD & local linear & $\bigO(n^2p + p^3)$ & $n^2$  \\
    \texttt{CCALin} \citep[Theorem 7]{Ge2016Efficient}     & - & inexact linear solver & global linear & $\bigO(n^2p + p^3)$ & $n^2$ \\
    \texttt{rgCCALin} \citep[Theorem 6.1]{Xu2020Practical}     & - & inexact linear solver & global linear & $\bigO(n^2p + p^3)$ & $n^2$ \\
    LazyCCA \citep[Theorem 4.2]{Allen-Zhu2017Doubly}      & - & inexact linear solver & global linear & $\bigO( n^2p + p^2 n)$ & $n^2$\\
    MSG \citep[Theorem 2.3]{Arora2017Stochastic} & \checkmark & inverse square root &  global sublinear & $\bigO(n^3)$ & $n^2$ \\
    $\Lambda(X)$ formula in \eqref{eq:landing_generalized_stiefel} & \checkmark & None & global sublinear & $\bigO(npr)$ & $n (p+r)$\\\bottomrule
    \end{tabular}}
    \caption{Summary of CCA and GEVP solvers for finding top-$p$ vectors simultaneously. For CCA based on covariance matrices we assume that the number of samples is much greater than the dimension,~\emph{i.e.},~$N\gg n$.  ``Stochastic'' marks methods with convergence analysis in expectation for the stochastic case. We assume that deterministic methods require forming the matrix $B$ at the start with additional cost $\bigO( Nn^2)$ and store it in iterations to remove dependence of the complexity on $N$.}
    \label{tab:literature}
\end{table*}

\paragraph{Deterministic methods.}
A lot of effort has been spent in recent years on finding fast and memory-efficient solvers for CCA and the GEVP. The top-$p$ GEVP, that seeks to find the eigenspace corresponding to the $p$ largest eigenvalues of the pair $(A,B)$, can be formulated as~\eqref{eq:optimization_generalized_stiefel}; this can be deduced from~\citet[Proposition~2.2.1]{Absil2008Optimization}. As for CCA, it can be framed as \eqref{eq:optimization_generalized_stiefel} \citep{Ge2016Efficient,Bhatia2018GenOja} or as a minimization over a Cartesian product of two generalized Stiefel manifolds as in \eqref{eq:optimization_cca}. The majority of the existing methods specialized for CCA and the GEVP that compute the top-$p$ vector solution aim to circumvent the need to compute $B^{-\frac12}$ or $B^{-1}$, e.g.,~by using an approximate solver to compute the action of multiplying with $B^{-1}$. The classic Lanczos algorithm for computation of eigenvalues can be adapted to the GEVP by noting that we can look for standard eigenvectors of $B^{-1}A$, see \citep[Algorithm 9.1]{Saad2011Numerical}. 
\citet{Ma2015Finding} propose \texttt{AppGrad} which performs a projected gradient descent with $\ell_2$-regularization and proves its convergence when initialized sufficiently close to the minimum. 
The work of \citet{Ge2016Efficient} proposes \texttt{GenELinK} algorithm based on the block power method, using inexact linear solvers, that has provable convergence with a rate depending on $1/\delta$, where $\delta=\beta_p - \beta_{p+1}$ is the eigenvalue gap. 
\citet{Allen-Zhu2017Doubly} improve upon this in terms of the eigenvalue gap and proposes the doubly accelerated method \texttt{LazyCCA}, which is based on the shift-and-invert strategy with iteration complexity that depends on $1/\sqrt{\delta}$. \citet{Xu2020Practical} present a first-order Riemannian algorithm that computes gradients using fast linear solvers to approximate the action of $B^{-1}$ and performs polar retraction. 

\paragraph{Stochastic methods.} 
While the stochastic CCA problem is of high practical interest, fewer works consider it. Although several of the aforementioned deterministic solvers can be implemented for streaming data using sampled information \citep{Ma2015Finding,Wang2016Efficient,Meng2021Online}, they do not analyse stochastic convergence. The main challenge comes from designing an unbiased estimator for the whitening part of the method that ensures the constraint $X^\top B X = I$ in expectation. \citet{Arora2017Stochastic} propose a stochastic approximation algorithm, MSG, that keeps a running weighted average of covariance matrices used for projection, requiring computing $B^{-1/2}$ at each iteration. Additionally, the work of \citet{Gao2019Stochastic} proves stochastic convergence of an algorithm based on the shift-and-invert scheme and SVRG to solve linear subproblems, but only for the top-$1$ setting.

\paragraph{Comparison with the landing.} Constrained optimization methods such as the augmented Lagrangian methods and Riemannian optimization techniques can be applied on stochastic problems when the gradient of the objective function is random but not on problems when the \emph{feasible set is random}. The landing method has provable global convergence guarantees with the same asymptotic rate as its Riemannian counterpart, while also allowing for stochasticity in the constraint. Our work is conceptually related to the recently developed infeasible methods \citep{Ablin2022Fast, Ablin2023Infeasible, Schechtman2023Orthogonal}, with the key difference of constructing a smooth merit function for a general constraint $h(x)=0$, which is necessary for the convergence analysis of stochastic iterative updates that can have error in the normal space of $\cM$. In Table~\ref{tab:literature} we show the overview of relevant GEVP/CCA methods by comparing their per-iteration complexity, memory requirements, and the type of proved convergence. 
Despite the landing iteration \eqref{eq:landing_generalized_stiefel} being designed for a general non-convex smooth problem~\eqref{eq:optimization_generalized_stiefel} and not being tailored specifically to the GEVP/CCA, we achieve theoretically interesting rate of convergence. Additionally, we provide an improved space complexity $\bigO\left(n(p+r)\right)$ by not having to form the full matrix $B$ and only to save the iterates and the streaming samples.

\section{Landing on General Stochastic Constraints\label{sec:landing}}
This section is devoted to analyzing the landing method in the general case where the feasible set is given by the zero set of a smooth function. We will later use these results in Section~\ref{sec:generalized_stiefel} devoted to extending and analyzing the landing method~\eqref{eq:landing_generalized_stiefel} on $\stiefelG{B}$. The theory presented here improves on that of \citet{Schechtman2023Orthogonal} in two important directions. First, we introduce the notion of relative ascent direction, which allows us to consider a richer class than that of \emph{geometry-aware orthogonal directions}~\citep[Eq.\ 18]{Schechtman2023Orthogonal}. Second, we do not require any structure on the noise term $\tilde E$ defined later in \eqref{eq:landing_random_general}, for the stochastic case, while A2(iii) in \citet{Schechtman2023Orthogonal} requires the noise to be in the tangent space. This enhancement is due to the smoothness of our merit function $\mathcal{L}$, while \citet{Schechtman2023Orthogonal} consider a non-smooth merit function. Importantly, for the case of $\stiefelG{B}$ with the formula given in~\eqref{eq:landing_generalized_stiefel}, there is indeed noise in the normal space, rendering \citet{Schechtman2023Orthogonal}'s theory inapplicable, while we show in the next section that \autoref{thm:landing_stochastic_convergence} applies in that case.

Given a continuously differentiable function $f:\bR^{d}\rightarrow \bR$, we address the optimization problem
\begin{equation}
    \min_{x\in\bR^d} f(x) \quad \mathrm{s.\,t.}\quad x\in\cM = \left\{x\in\bR^d\,:\,h(x)=0 \right\}, \label{eq:optimization_problem}
\end{equation}
where $h:\bR^{d}\rightarrow\bR^q$ is continuously differentiable, $q\in\mathbb{N}$ represents the number of constraints, and $\cM$ defines a smooth manifold set.
We will consider algorithms that stay within an initially prescribed \emph{$\varepsilon$-safe region} 
\begin{equation*}
    \cM ^\varepsilon = \left\{x\in\bR^d \,:\, \| h(x) \|\leq \varepsilon \right\},
\end{equation*}
which can be split into a collection of layered manifolds \citep{Goyens2024Computing}
\begin{equation*}
    \cM_c = \left\{x\in\bR^d \,:\, h(x)=c \right\},
\end{equation*}
with $\| c \| \leq \varepsilon$.

The first assumption we make is that the gradient of $f$ is Lipschitz continuous. The second one requires that the differential $\mathrm{D}h(x)$ inside the $\varepsilon$-safe region has bounded singular values.
\begin{assumption}[Smoothness of the objective]\label{ass:fgrad_lipschitz}
    The objective function $f:\mathbb{R}^d\to\mathbb{R}$ is $L_f$-smooth.
\end{assumption}
\begin{assumption}[Smoothness of the constraint]\label{ass:h_constraint}
    The differential of the constraint function has bounded singular values for $x$ in the $\varepsilon$-safe region, i.e.,
    \begin{equation*}
        \forall x\in\cM^\varepsilon: \quad \constHlow \leq \sigma\left( \rD h(x) \right) \leq \constH.
    \end{equation*}
    Additionally, the penalty $\cN(x) = \frac12 \| h(x)\|^2$ is $L_{\mathcal{N}}$-smooth over $\mathcal{M}^\varepsilon$.
\end{assumption}
Assumption~\ref{ass:fgrad_lipschitz} is standard in optimization. Assumption~\ref{ass:h_constraint} is necessary for the analysis of smooth constrained optimization \citep{Goyens2024Computing} and holds, e.g., when $\cM^\varepsilon$ is a compact set, and $\rD h(x)$ has full rank for all $x\in\cM^\varepsilon$. This ensures that every layered manifold $\cM_c$ is an embedded submanifold of $\mathbb{R}^d$. The tangent space to $\mathcal{M}_c$ at $x$ is the null space of $\mathrm{D}h(x)$, the normal space at $x$ (in the sense of the Frobenius inner product) is the range (i.e., image) of $\mathrm{D}h(x)^*$, and a \emph{critical point} is then a point $x$ in $\mathcal{M}$ where $\nabla f(x)$ belongs to the normal space.

Next we define the notion of relative ascent direction, used to guarantee that~\eqref{eq:landing_generalized_stiefel} produces a descent when the second term of the landing field vanishes.
\begin{definition}[Relative ascent direction\label{def:relative_descent_direction}]
    A \emph{relative ascent direction} $\relgrad(x):\bR^d\rightarrow\bR^d$, with a parameter $\rho>0$ that may depend on $\varepsilon$, satisfies:
	\begin{enumerate}[label=(\roman*)]
		\item{$\forall x\in\cM^\varepsilon, \quad\forall v \in \mathrm{range}(\rD h(x)^*): \inner{\relgrad(x)}{v} = 0$;}
		\item{$\forall x\in\cM^\varepsilon, \quad \inner{\relgrad(x)}{\nabla f(x)} \geq \rho \| \Psi(x) \|^2$;}
		\item{$\forall x\in\cM, \quad \inner{\relgrad(x)}{\nabla f(x)} = 0$ if and only if $x$ is a critical point of $f$ subject to $\cM$.}
	\end{enumerate}
\end{definition}
In short, the relative ascent direction must be in the tangent space to every layered manifold $\cM_{h(x)}$ while remaining positively aligned with the Euclidean gradient $\nabla f(x)$. Note that the above definition is not scale invariant to $\rho$, i.e.,~taking $c\relgrad(x)$ for $c>0$ will result in $c\rho$, and this is in line with the forthcoming convergence guarantees deriving an upper bound on $\|\relgrad(x)\|$. While there may be many examples of relative ascent directions, a particular example is given next.
\begin{definition}[Riemannian gradient on the layered manifold $\cM_c$]\label{def:riemannian_gradient}
    Let $f:\cM_c\rightarrow \bR$ be a smooth function on $\cM_c$. The \emph{Riemannian gradient} of $f$, denoted by $\grad f$, is uniquely defined by
    \begin{equation*}
        \forall x\in\cM_c, v\in\mathrm{T}_x\cM_c, \quad \rD f(x)[v] = \inner{v}{\grad f(x)},
    \end{equation*}
    where $\mathrm{T}_x\cM_c$ denotes the tangent space of $\cM_c$ at $x$.
\end{definition}

\begin{proposition}[Riemannian gradient is a relative ascent direction]\label{prop:riemannian_gradient}
    The Riemannian gradient defined in Definition~\ref{def:riemannian_gradient} is a relative ascent direction on $\cM^\varepsilon$ with $\rho = 1$.
\end{proposition}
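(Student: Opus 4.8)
The plan is to verify directly the three defining properties of a relative ascent direction in Definition~\ref{def:relative_descent_direction} for the choice $\relgrad(x) = \grad f(x)$, the Riemannian gradient of $f$ on the layered manifold $\cM_{h(x)}$ equipped with the metric induced by the Frobenius inner product. The geometric fact I would use throughout is that, under Assumption~\ref{ass:h_constraint}, $\rD h(x)$ has full rank at every $x\in\cM^\varepsilon$, so each $\cM_c$ with $\|c\|\le\varepsilon$ is an embedded submanifold with $\mathrm{T}_x\cM_{h(x)} = \ker \rD h(x)$ and normal space its Frobenius-orthogonal complement $\mathrm{range}(\rD h(x)^*)$. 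Thus $\bR^d = \mathrm{T}_x\cM_{h(x)} \oplus \mathrm{range}(\rD h(x)^*)$ orthogonally; write $P_x$ for the orthogonal projector onto $\mathrm{T}_x\cM_{h(x)}$.

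The first step is to record the standard identity $\grad f(x) = P_x\,\nabla f(x)$. This comes straight out of Definition~\ref{def:riemannian_gradient}: for every $v\in\mathrm{T}_x\cM_{h(x)}$ one has $\inner{v}{\grad f(x)} = \rD f(x)[v] = \inner{v}{\nabla f(x)} = \inner{v}{P_x\nabla f(x)}$, using that $P_x$ is self-adjoint and fixes $v$; since $\grad f(x)$ and $P_x\nabla f(x)$ both lie in $\mathrm{T}_x\cM_{h(x)}$ and pair identically against all $v$ there, they agree.

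Given this identity, the three properties follow quickly. Property (i) is immediate because $\grad f(x)\in\mathrm{T}_x\cM_{h(x)}$ is orthogonal to every $v\in\mathrm{range}(\rD h(x)^*)$. For property (ii), idempotence and self-adjointness of $P_x$ give
\[
\inner{\grad f(x)}{\nabla f(x)} = \inner{P_x\nabla f(x)}{\nabla f(x)} = \inner{P_x\nabla f(x)}{P_x\nabla f(x)} = \|\grad f(x)\|^2 ,
\]
so the required inequality holds with equality and $\rho = 1$. For property (iii), the same chain restricted to $x\in\cM$ shows $\inner{\grad f(x)}{\nabla f(x)} = \|\grad f(x)\|^2$, which vanishes iff $\grad f(x) = 0$, iff $P_x\nabla f(x) = 0$, iff $\nabla f(x)\in\mathrm{range}(\rD h(x)^*)$ — exactly the definition of a critical point recalled after Assumption~\ref{ass:h_constraint}.

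I do not expect a genuine obstacle here; the only things to be careful about are the book-keeping ensuring the orthogonal tangent/normal splitting is available at \emph{all} points of the $\varepsilon$-safe region and not merely on $\cM$ (which is precisely the full-rank content of Assumption~\ref{ass:h_constraint}), and making explicit that the Riemannian gradient of Definition~\ref{def:riemannian_gradient} is taken with respect to the induced Frobenius metric, so that the projection in $\grad f(x) = P_x\nabla f(x)$ is genuinely the Frobenius-orthogonal one. Everything else is the textbook identification of the Riemannian gradient on an embedded submanifold with the orthogonal projection of the ambient Euclidean gradient.
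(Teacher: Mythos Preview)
Your proof is correct and follows essentially the same approach as the paper: both use that $\grad f(x)$ is the Frobenius-orthogonal projection of $\nabla f(x)$ onto $\mathrm{T}_x\cM_{h(x)}$ (the paper writes this projection explicitly as $I - \rD h(x)^*(\rD h(x)^*)^\dagger$, you write it as $P_x$) and then derive the key identity $\inner{\grad f(x)}{\nabla f(x)} = \|\grad f(x)\|^2$, from which all three conditions follow immediately.
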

The proof can be found in the appendices in~\autoref{proof:riemannian_gradient}. Such extension of the Riemannian gradient to the collection of layered manifolds was already considered by~\citet{Gao2022Optimization} in the particular case of the Stiefel manifold and by~\citet{Schechtman2023Orthogonal}.

\subsection{Deterministic case\label{subsec:deterministic_case}}
We now define the general form of the \emph{deterministic} landing iteration as
\begin{equation}
    x^{k+1} = x^k - \eta_k \Lambda(x^k)\, \text{with}\, \Lambda(x) = \relgrad(x) + \omega \nabla \cN(x), \label{eq:landing_iteration}
\end{equation}
where $\relgrad(x)$ is a relative ascent direction described in  Definition~\ref{def:relative_descent_direction}, $\nabla\cN(x)=\mathrm{D}h(x)^* h(x)$ is the gradient of the penalty $\cN(x) = \frac12 \| h(x)\|^2$, $\omega>0$ is a parameter, and $\|\cdot\|$ is the $\ell_2$-norm. 
Condition (i) in Definition~\ref{def:relative_descent_direction} guarantees that  $\inner{\nabla \cN(x)}{\relgrad(x)}=0$, so that the two terms in $\Lambda$ are orthogonal. 

Note that we can use \emph{any} relative ascent direction as $\relgrad(x)$. The Riemannian gradient in \eqref{eq:riemannian_gradient} is just one special case, which has some shortcomings. Firstly, it requires a potentially expensive projection $\rD h(x)^* \left(\rD h(x)^*\right)^\dagger$. Secondly, if the constraint involves a random noise on $h$, formula~\eqref{eq:riemannian_gradient} does not give an unbiased formula in expectation. An important contribution of the present work is the derivation of a computationally convenient relative ascent direction in the specific case of the generalized Stiefel manifold in Section~\ref{sec:generalized_stiefel}.

We now turn to the analysis of the convergence of this method.
The main object allowing for the convergence analysis is Fletcher's augmented Lagrangian
\begin{equation}
    \cL(x) = f(x) - \inner{h(x)}{\lagrM(x)} + \beta \| h(x) \|^2, \label{eq:fletcher_lagrangian}
\end{equation}
with the Lagrange multiplier $\lambda(x)\in\bR^{p}$ defined as $\lagrM(x) = (\rD h(x)^*)^\dagger [\nabla f(x)]$ \citep{Goyens2024Computing}. The next assumption that the differential of $\lambda(x)$ is bounded is met when $\cM^\varepsilon$ is a compact set.
\begin{assumption}[Multipliers of Fletcher's augmented Lagrangian]\label{ass:lagrange_mult}
    The norm of the differential of the multipliers of Fletcher's augmented Lagrangian is bounded: $
        \sup_{x\in\cM^\varepsilon}\|\rD \lambda(x) \|\leq \constLagrM.$
\end{assumption}
\begin{proposition}[Lipschitz constant of Fletcher's augmented Lagrangian]
    Fletcher's augmented Lagrangian $\cL$ in \eqref{eq:fletcher_lagrangian} is $L_\cL$-smooth on $\cM^\varepsilon$, with $L_\cL = L_{f+\lambda} + 2\beta L_\cN$, where  $L_{f+\lambda}$ is the smoothness constant of $f(x)-\inner{h(x)}{\lambda(x)}$ and $L_\cN$ is that of $\cN(x)$.
\end{proposition}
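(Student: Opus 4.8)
The plan is to prove the $L_\cL$-smoothness of $\cL$ by decomposing it as $\cL(x) = g(x) + 2\beta\cN(x)$, where $g(x) = f(x) - \inner{h(x)}{\lambda(x)}$ and $\cN(x) = \frac12\|h(x)\|^2$, and then bounding the smoothness constant of each summand on $\cM^\varepsilon$, using the elementary fact that a sum of an $L_1$-smooth and an $L_2$-smooth function is $(L_1+L_2)$-smooth. The second summand is immediate: by Assumption~\ref{ass:h_constraint}, $\cN$ is $L_\cN$-smooth on $\cM^\varepsilon$, so $2\beta\cN$ is $2\beta L_\cN$-smooth, matching the $2\beta L_\cN$ term in the claimed constant. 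So the work is entirely in showing that $g$ is $L_{f+\lambda}$-smooth on $\cM^\varepsilon$, i.e. that $g$ has a Lipschitz-continuous gradient there, and simply naming that constant $L_{f+\lambda}$.

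First I would write $\nabla g(x) = \nabla f(x) - \rD h(x)^*[\lambda(x)] - (\rD\lambda(x))^*[h(x)]$, obtained by differentiating the pairing $\inner{h(x)}{\lambda(x)}$ via the product rule. Then I would estimate $\|\nabla g(x) - \nabla g(y)\|$ for $x,y\in\cM^\varepsilon$ term by term: the $\nabla f$ difference is controlled by $L_f$ (Assumption~\ref{ass:fgrad_lipschitz}); the difference of $\rD h(x)^*[\lambda(x)]$ terms is handled by adding and subtracting $\rD h(x)^*[\lambda(y)]$, using that $\rD h$ is bounded on $\cM^\varepsilon$ (the upper bound $\constH$ on its singular values from Assumption~\ref{ass:h_constraint}), that $\lambda$ is Lipschitz on $\cM^\varepsilon$ (which follows from Assumption~\ref{ass:lagrange_mult} bounding $\rD\lambda$, since $\cM^\varepsilon$ is convex or at least we can invoke the mean-value inequality along it, or more carefully use that $\lambda$ is $C^1$ with bounded derivative), and that $\rD h$ is itself Lipschitz (which is part of $h$ being smooth enough that $\cN = \frac12\|h\|^2$ is $L_\cN$-smooth, together with $h$ bounded on the compact-type set $\cM^\varepsilon$); the difference of $(\rD\lambda(x))^*[h(x)]$ terms is handled symmetrically, using that $\rD\lambda$ is bounded (Assumption~\ref{ass:lagrange_mult}), that $h$ is Lipschitz with bounded norm on $\cM^\varepsilon$, and that $\rD\lambda$ is Lipschitz. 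Collecting these bounds gives a finite constant, which we christen $L_{f+\lambda}$, so that $g$ is $L_{f+\lambda}$-smooth on $\cM^\varepsilon$; adding the two summands yields $L_\cL = L_{f+\lambda} + 2\beta L_\cN$.

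The main obstacle is bookkeeping rather than anything deep: the proposition's statement is essentially a definition-unpacking ($L_{f+\lambda}$ is \emph{defined} as the smoothness constant of $g$), so the only substantive point is to make sure such a finite constant exists, i.e. that $\nabla g$ really is Lipschitz on $\cM^\varepsilon$. The delicate ingredient there is that we need second-order control — Lipschitzness of $\rD h$ and of $\rD\lambda$, and boundedness of $h$ — none of which is literally spelled out in Assumptions~\ref{ass:fgrad_lipschitz}--\ref{ass:lagrange_mult} as a standalone hypothesis, but all of which follow from the running standing assumption that $\cM^\varepsilon$ behaves like a compact set together with the stated smoothness of $\cN$ and the $C^1$ boundedness of $\lambda$. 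I would therefore state explicitly at the start of the proof that these consequences are in force on $\cM^\varepsilon$, then carry out the three-term triangle-inequality estimate above. Since the proposition only asserts \emph{existence} of the constant and its additive decomposition, I would keep the constant symbolic and not attempt to track the precise dependence on $\constH$, $\constLagrM$, etc.
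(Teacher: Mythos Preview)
Your proposal is correct and follows the same decomposition as the paper: write $\cL = g + 2\beta\cN$ and add the smoothness constants via the triangle inequality. The paper's own proof is a one-liner that treats the existence of $L_{f+\lambda}$ as part of the hypothesis (it is \emph{defined} in the statement as the smoothness constant of $f(x)-\inner{h(x)}{\lambda(x)}$), so your additional work justifying that this constant is finite goes beyond what the paper actually proves, though it is a reasonable sanity check.
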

\begin{proof}
    By the smoothness of $f(x)-\inner{h(x)}{\lambda(x)}$ and $\cN(x)$ combined with the triangle inequality for $\|\cdot\|$.
\end{proof}

The following two lemmas show that there exists a positive step-size $\eta$ that guarantees that the next landing iteration stays within $\cM^\varepsilon$ provided that the current iterate is inside $\cM^\varepsilon$.
\begin{lemma}[A step-size safeguard] \label{lemma:safe_step}
Let $x\in\cM^\varepsilon$ and consider the iterative update $\tilde x = x - \eta \Lambda(x)$, where $\eta >0$ is a step-size and $\Lambda(x)$ is the landing field in \eqref{eq:landing_iteration}. If the step-size satisfies $\eta \leq \eta(x)$ with
\begin{align*}
    \eta(x):= &~\frac{1}{{L_\cN \|\Lambda(x)\|^2}}\Big( \omega \| \nabla \cN(x) \|^2 + \\
        &\sqrt{\omega^2 \|\nabla  \cN(x)\|^4 + L_\cN \|\Lambda(x)\|^2 (\varepsilon^2 -\| h(x)\|^2)} \Big)
\end{align*}
where $L_\cN$ is from Assumption~\ref{ass:h_constraint}, then the line segment from the current to the next iterate remains in the safe region.
\end{lemma}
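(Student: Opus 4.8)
The plan is threefold: (a) bound the infeasibility along the update segment by the descent lemma for $\cN$; (b) reduce the claim to an elementary quadratic inequality in the step-size; and (c) upgrade the resulting endpoint estimate to a whole-segment estimate by a continuity/bootstrap argument. Parametrize the segment by $\gamma(t)=x-t\eta\Lambda(x)$ for $t\in[0,1]$ and set $\psi(t)=\cN(\gamma(t))=\tfrac12\|h(\gamma(t))\|^2$; the goal is to show $\psi(t)\le\tfrac12\varepsilon^2$ for all $t\in[0,1]$, the degenerate cases $\Lambda(x)=0$ and $\varepsilon=0$ being trivial (then $\tilde x=x$). The first ingredient is the orthogonality already recorded after~\eqref{eq:landing_iteration}: since $\nabla\cN(x)=\rD h(x)^*h(x)$ lies in $\mathrm{range}(\rD h(x)^*)$, property~(i) of Definition~\ref{def:relative_descent_direction} gives $\inner{\relgrad(x)}{\nabla\cN(x)}=0$, hence $\inner{\nabla\cN(x)}{\Lambda(x)}=\omega\|\nabla\cN(x)\|^2$.

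For steps (a)--(b): whenever a sub-segment $\gamma([0,t])$ stays inside $\cM^\varepsilon$, the $L_\cN$-smoothness of $\cN$ over $\cM^\varepsilon$ (Assumption~\ref{ass:h_constraint}), applied by integrating $\nabla\cN$ along $\gamma$, together with the orthogonality above gives
\begin{equation*}
    2\psi(t)\;\le\;\|h(x)\|^2-2\eta t\,\omega\|\nabla\cN(x)\|^2+L_\cN(\eta t)^2\|\Lambda(x)\|^2\;=:\;q(\eta t).
\end{equation*}
I would then analyse the scalar quadratic $q(s)-\varepsilon^2=L_\cN\|\Lambda(x)\|^2 s^2-2\omega\|\nabla\cN(x)\|^2 s+(\|h(x)\|^2-\varepsilon^2)$: its leading coefficient is positive and its constant term is $\le 0$ because $x\in\cM^\varepsilon$, so one root is $\le 0$ and the other $\ge 0$, and comparing the quadratic formula with the displayed expression for $\eta(x)$ shows that the nonnegative root equals $\eta(x)$ exactly. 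Hence $q(s)\le\varepsilon^2$ for every $s\in[0,\eta(x)]$, and since $\eta\le\eta(x)$ and $t\le 1$ this yields $q(\eta t)\le\varepsilon^2$, i.e.\ $2\psi(t)\le\varepsilon^2$ --- valid so far only on sub-segments already known to be contained in $\cM^\varepsilon$.

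For step (c) I would run a continuity argument. Let $t^\star=\sup\{\tau\in[0,1]:\gamma([0,\tau])\subset\cM^\varepsilon\}$; since $\cM^\varepsilon$ is closed and $\gamma$ continuous, the supremum is attained, so $\gamma([0,t^\star])\subset\cM^\varepsilon$ and the previous step gives $\psi\le\tfrac12\varepsilon^2$ on $[0,t^\star]$. Suppose for contradiction that $t^\star<1$. First, $t^\star>0$: if $\|h(x)\|<\varepsilon$ then $x$ is interior to $\cM^\varepsilon$; if $\|h(x)\|=\varepsilon$ then $\|\nabla\cN(x)\|\ge\constHlow\|h(x)\|>0$ by Assumption~\ref{ass:h_constraint}, so $\psi'(0)=-\eta\omega\|\nabla\cN(x)\|^2<0$ and $\psi$ drops below $\tfrac12\varepsilon^2$ just past $0$. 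Then, with $0<t^\star<1$, we have $\eta t^\star\in(0,\eta(x))$, which lies strictly between the two roots of $q-\varepsilon^2$; hence $q(\eta t^\star)<\varepsilon^2$, so $\psi(t^\star)<\tfrac12\varepsilon^2$, and by continuity $\psi$ remains below $\tfrac12\varepsilon^2$ slightly beyond $t^\star$, so $\gamma$ stays in $\cM^\varepsilon$ there --- contradicting the maximality of $t^\star$. Therefore $t^\star=1$, which is the assertion of the lemma.

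I expect the bootstrap in step (c) to be the only genuinely delicate part: the descent lemma for $\cN$ is available only where $\cN$ is smooth, i.e.\ inside $\cM^\varepsilon$, so it cannot be applied directly to the endpoint $\tilde x$; the continuity argument, and in particular the boundary corner $\|h(x)\|=\varepsilon$ where the quadratic bound fails to be strict at $s=0$ and one must instead argue via $\psi'(0)<0$, is where care is needed. Everything else is the standard descent-lemma computation combined with elementary root analysis of a quadratic.
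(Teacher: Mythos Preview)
Your proposal is correct and follows essentially the same route as the paper: both apply the $L_\cN$-descent lemma to $\cN$ along the update direction (using the orthogonality $\inner{\relgrad(x)}{\nabla\cN(x)}=0$), identify $\eta(x)$ as the positive root of the resulting quadratic, and close with a continuity/first-exit-time argument to handle the fact that smoothness of $\cN$ is only guaranteed inside $\cM^\varepsilon$. Your treatment of the boundary case $\|h(x)\|=\varepsilon$ via $\psi'(0)<0$ is in fact more careful than the paper's, which tacitly assumes $\cN(x)<\tfrac{\varepsilon^2}{2}$.
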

The proof can be found in the appendices in~\autoref{proof:safe_step}. Next, we require that the norm of the relative ascent direction must remain bounded in the safe region.
\begin{assumption}[Bounded relative ascent direction]\label{ass:rel_descent_bound}
    We require that $
        \sup_{x\in\cM^\varepsilon} \| \relgrad(x) \| \leq \constRgrad.$
\end{assumption}
This holds, for instance, if $\nabla f$ is bounded in $\cM^\varepsilon$, using Definition~\ref{def:relative_descent_direction} (ii) and Cauchy-Schwarz inequality. Under this assumption, we can lower bound the step-size safeguard in Lemma~\ref{lemma:safe_step} for all $x\in\cM^\varepsilon$, implying that there is always a positive step-size that keeps the next iterate in the safe region.
\begin{lemma}[A lower-bound on the step-size safeguard]\label{lemma:safe_step_lower}
    The step-size safeguard $\eta(x)$ in Lemma~\ref{lemma:safe_step} is lower bounded away from zero by
    \begin{equation*}
        \begin{split}
            \underline \eta := \min\Bigg\{  &\frac{\omega \constHlow^2 \alpha^2 \varepsilon^2}{ L_\cN \left( \constRgrad^2 + \omega^2 \constH^2 \varepsilon^2 \right)}, \frac{(1-\alpha) \varepsilon}{\sqrt{2L_\cN}}, \\
            &\frac{(1-\alpha)\varepsilon}{\sqrt{2L_\cN}\left(  \constRgrad^2 + \omega^2 \constH^2 \varepsilon^2 \right)}, 
            \frac{1}{\omega L_\cN}\left( \frac{\constHlow}{\constH} \right)^2\Bigg\}
        \end{split}
    \end{equation*}
    for any choice of $0<\alpha<1$ where $\constH, \constHlow, \constRgrad>0$ are constants from Assumption~\ref{ass:h_constraint} and \ref{ass:rel_descent_bound}.
\end{lemma}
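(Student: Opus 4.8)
The plan is to lower-bound each of the two summands inside $\eta(x)$ from Lemma~\ref{lemma:safe_step} separately, uniformly over $x\in\cM^\varepsilon$, and then split into cases according to whether $\|h(x)\|$ is "small" or "large" relative to $\varepsilon$ (controlled by the free parameter $\alpha$). Throughout I will use Assumption~\ref{ass:h_constraint} to bound the singular values of $\rD h(x)$, which gives $\constHlow\|h(x)\|\le \|\nabla\cN(x)\| = \|\rD h(x)^* h(x)\|\le \constH\|h(x)\|$, and hence in particular $\|\nabla\cN(x)\|\le\constH\varepsilon$ on $\cM^\varepsilon$. Combined with Assumption~\ref{ass:rel_descent_bound} ($\|\relgrad(x)\|\le\constRgrad$) and orthogonality of the two components of $\Lambda$, I get $\|\Lambda(x)\|^2 = \|\relgrad(x)\|^2 + \omega^2\|\nabla\cN(x)\|^2 \le \constRgrad^2 + \omega^2\constH^2\varepsilon^2$. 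This is the single uniform upper bound on $\|\Lambda(x)\|^2$ that will appear in several of the denominators.

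\textbf{Case 1: $\|h(x)\|^2 \le \alpha^2\varepsilon^2$ (iterate well inside the safe region).} Here I keep only the second (square-root) term in $\eta(x)$ and discard the first nonnegative one: $\eta(x)\ge \sqrt{L_\cN\|\Lambda(x)\|^2(\varepsilon^2-\|h(x)\|^2)}/(L_\cN\|\Lambda(x)\|^2) = \sqrt{\varepsilon^2-\|h(x)\|^2}/\sqrt{L_\cN\|\Lambda(x)\|^2}$. Since $\varepsilon^2-\|h(x)\|^2\ge(1-\alpha^2)\varepsilon^2\ge(1-\alpha)^2\varepsilon^2$, the numerator is at least $(1-\alpha)\varepsilon$. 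For the denominator I use $\|\Lambda(x)\|^2\le\constRgrad^2+\omega^2\constH^2\varepsilon^2$, but I must be careful: this quantity may be larger or smaller than $1$, so $\sqrt{L_\cN\|\Lambda(x)\|^2}\le\sqrt{2L_\cN}\max\{1,\constRgrad^2+\omega^2\constH^2\varepsilon^2\}$ (the factor $2$ is slack I can afford). Bounding $\max\{1,t\}\le 1 + t$ or splitting the max explains why the statement contains \emph{both} $\frac{(1-\alpha)\varepsilon}{\sqrt{2L_\cN}}$ and $\frac{(1-\alpha)\varepsilon}{\sqrt{2L_\cN}(\constRgrad^2+\omega^2\constH^2\varepsilon^2)}$: whichever of $1$ or $\constRgrad^2+\omega^2\constH^2\varepsilon^2$ dominates, the minimum of those two terms is a valid lower bound. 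So in Case 1, $\eta(x)\ge\min\{$ second term, third term $\}$.

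\textbf{Case 2: $\|h(x)\|^2 > \alpha^2\varepsilon^2$ (iterate near the boundary).} Now I discard the square-root term and keep the first term: $\eta(x)\ge \omega\|\nabla\cN(x)\|^2 / (L_\cN\|\Lambda(x)\|^2)$. In the numerator, $\|\nabla\cN(x)\|^2\ge\constHlow^2\|h(x)\|^2 > \constHlow^2\alpha^2\varepsilon^2$. For the denominator, if $\|\Lambda(x)\|^2\le\constRgrad^2+\omega^2\constH^2\varepsilon^2$ this directly yields the first term $\frac{\omega\constHlow^2\alpha^2\varepsilon^2}{L_\cN(\constRgrad^2+\omega^2\constH^2\varepsilon^2)}$. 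There is, however, a subtlety: $\|\Lambda(x)\|^2$ also has $\omega^2\|\nabla\cN(x)\|^2$ in it, and in a regime where $\|\relgrad(x)\|$ is tiny the ratio $\|\nabla\cN(x)\|^2/\|\Lambda(x)\|^2$ is close to $1/\omega^2$, giving $\eta(x)\gtrsim 1/(\omega L_\cN)$ — this is where the fourth term $\frac{1}{\omega L_\cN}(\constHlow/\constH)^2$ comes from, using $\|\nabla\cN(x)\|^2/\|\Lambda(x)\|^2 \ge \|\nabla\cN(x)\|^2/(\constRgrad^2 + \omega^2\|\nabla\cN(x)\|^2)$ and then either $\constRgrad^2$ or $\omega^2\|\nabla\cN(x)\|^2$ dominates; in the latter sub-case one gets $\ge 1/(2\omega^2)$ roughly, and translating $\|\nabla\cN(x)\|$ between its upper and lower singular-value bounds produces the $(\constHlow/\constH)^2$ factor. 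Taking the minimum over all four candidate bounds, valid in whichever case/sub-case occurs, gives $\underline\eta$.

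\textbf{Main obstacle.} The routine part is the inequality chasing; the genuinely delicate point is keeping track of \emph{which} quantity ($1$, $\constRgrad^2$, or $\omega^2\|\nabla\cN(x)\|^2$) dominates in each denominator, since the lemma's $\underline\eta$ is precisely a minimum engineered to be correct no matter which term wins. I expect the cleanest route is to not optimize constants at all: systematically replace every denominator by its largest of the relevant alternatives, accept factor-of-$2$ losses freely (hence the $\sqrt{2L_\cN}$), and verify at the end that each of the four displayed terms arises as the lower bound in one of the (at most four) combined case/sub-case scenarios. The parameter $\alpha\in(0,1)$ is carried through untouched and only needs $1-\alpha^2\ge(1-\alpha)^2$ and $\|h(x)\|^2>\alpha^2\varepsilon^2$, so no optimization over $\alpha$ is performed inside the proof.
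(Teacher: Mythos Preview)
Your plan is correct and broadly parallels the paper's: both arguments split on whether $\|h(x)\|$ exceeds $\alpha\varepsilon$, and both use the bounds $\constHlow\|h(x)\|\le\|\nabla\cN(x)\|\le\constH\|h(x)\|$ and $\|\Lambda(x)\|^2\le\constRgrad^2+\omega^2\constH^2\varepsilon^2$. The executions differ in two ways. First, the paper does not drop a summand of $\eta(x)$ up front; instead it lower-bounds the full numerator via $\sqrt{a+b}\ge(\sqrt a+\sqrt b)/\sqrt2$ and $\sqrt{a-b}\ge\sqrt a-\sqrt b$, obtaining a single ratio of the form $\bigl(aP(\varepsilon-H)+bH^2\bigr)/\bigl(cP^2+dH^2\bigr)$ with $P=\|\relgrad(x)\|$, $H=\|h(x)\|$, and only then performs the case analysis. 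Second, its secondary split inside the ``$H<\alpha\varepsilon$'' case is on $P\lessgtr 1$ (not on $\|\Lambda(x)\|^2\lessgtr 1$ as you do); the sub-case $P\le1$ is handled by $P\ge P^2$ and the convex-combination inequality $\frac{\mu c P^2+\nu d H^2}{cP^2+dH^2}\ge\min\{\mu,\nu\}$, which is precisely where the fourth term $\frac{1}{\omega L_\cN}(\constHlow/\constH)^2$ appears naturally. In your route, by contrast, Cases~1 and~2 already yield $\eta(x)\ge\min\{\text{first},\text{second},\text{third term}\}$, so the fourth term is redundant (your minimum only gets smaller by including it) and your attempted justification of it is unnecessary --- you can simply drop that paragraph. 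Your approach is a bit more elementary since it avoids the preliminary numerator manipulation; the paper's approach explains why all four terms are in the statement.
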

The proof can be found in~\autoref{proof:safe_step_lower}.

\begin{lemma}\label{lemma:fletcher_inner}
Let $\cL(x)$ be Fletcher's augmented Lagrangian in \eqref{eq:fletcher_lagrangian} with $\beta \geq(\frac{\rho}{4\constH^2} + \frac{\omega\constLagrM}{2\constH} + \frac{\constLagrM^2}{4 \rho\constH^2})/\omega$, where $\rho$ is defined in Definition~\ref{def:relative_descent_direction}. We have that $
    \inner{\nabla \cL(x)}{ \Lambda(x)} \geq  \frac\rho2 \left( \| \relgrad(x)\|^2 + \| h(x)\|^2 \right).$
\end{lemma}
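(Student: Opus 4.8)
The plan is to expand $\inner{\nabla\cL(x)}{\Lambda(x)}$ using the three summands of Fletcher's augmented Lagrangian and the orthogonal decomposition $\Lambda(x) = \relgrad(x) + \omega\nabla\cN(x)$, then bound each cross-term. First I would compute $\nabla\cL(x) = \nabla f(x) - \rD\lambda(x)^*[h(x)] - \rD h(x)^*[\lambda(x)] + 2\beta\nabla\cN(x)$, where I have used $\nabla\cN(x) = \rD h(x)^*h(x)$. The key structural facts I will exploit are: (a) $\relgrad(x)$ is orthogonal to $\mathrm{range}(\rD h(x)^*)$ by Definition~\ref{def:relative_descent_direction}(i), so both $\nabla\cN(x)$ and the term $\rD h(x)^*[\lambda(x)]$ are orthogonal to $\relgrad(x)$; (b) $\inner{\relgrad(x)}{\nabla f(x)} \geq \rho\|\relgrad(x)\|^2$ by Definition~\ref{def:relative_descent_direction}(ii); and (c) $\inner{\nabla f(x) - \rD h(x)^*[\lambda(x)]}{\nabla\cN(x)}$ can be rewritten using $\lambda(x) = (\rD h(x)^*)^\dagger[\nabla f(x)]$, which makes $\nabla f(x) - \rD h(x)^*[\lambda(x)]$ the component of $\nabla f(x)$ orthogonal to $\mathrm{range}(\rD h(x)^*)$, hence orthogonal to $\nabla\cN(x)$.

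Carrying this out, the inner product splits into a ``tangential'' part coming from $\inner{\relgrad(x)}{\Lambda(x)} = \inner{\relgrad(x)}{\nabla f(x)} \geq \rho\|\relgrad(x)\|^2$ (using orthogonality of $\relgrad$ to the normal-space terms and to $\nabla\cN$), and a ``normal'' part coming from the $\omega\nabla\cN(x)$ component of $\Lambda$ paired against $\nabla\cL(x)$. The normal part yields $\omega\inner{\nabla f(x) - \rD\lambda(x)^*[h(x)] - \rD h(x)^*[\lambda(x)] + 2\beta\nabla\cN(x)}{\nabla\cN(x)}$. Here $\inner{\nabla f(x) - \rD h(x)^*[\lambda(x)]}{\nabla\cN(x)} = 0$ by (c), so the normal part is $\omega\left(2\beta\|\nabla\cN(x)\|^2 - \inner{\rD\lambda(x)^*[h(x)]}{\nabla\cN(x)}\right)$. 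Then I would lower-bound $\|\nabla\cN(x)\|^2 = \|\rD h(x)^*h(x)\|^2 \geq \constHlow^2\|h(x)\|^2$ using Assumption~\ref{ass:h_constraint}, and upper-bound the troublesome cross-term via Cauchy--Schwarz, Assumption~\ref{ass:lagrange_mult} ($\|\rD\lambda(x)\|\leq\constLagrM$), and $\|\nabla\cN(x)\|\leq\constH\|h(x)\|$, giving $|\inner{\rD\lambda(x)^*[h(x)]}{\nabla\cN(x)}| \leq \constLagrM\constH\|h(x)\|^2$.

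Collecting terms, $\inner{\nabla\cL(x)}{\Lambda(x)} \geq \rho\|\relgrad(x)\|^2 + \omega(2\beta\constHlow^2 - \constLagrM\constH)\|h(x)\|^2$. It then remains to check that the hypothesis $\beta \geq (\frac{\rho}{4\constH^2} + \frac{\omega\constLagrM}{2\constH} + \frac{\constLagrM^2}{4\rho\constH^2})/\omega$ implies $\omega(2\beta\constHlow^2 - \constLagrM\constH) \geq \frac{\rho}{2}$ — wait, there is a mismatch between $\constHlow$ and $\constH$ appearances, so more care is needed here; the correct route is probably to not discard the cross-term of $\nabla f$ against $\nabla\cN$ by orthogonality alone, but rather to keep a Young's-inequality split so that a $-\frac{\constLagrM^2}{4\rho}\|h(x)\|^2$-type term and a $-\frac{\rho}{4}\|\relgrad\|^2$-type term get absorbed, matching the three pieces of the stated $\beta$ bound. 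Concretely, I expect the argument to pair $\inner{\relgrad(x)}{\nabla f(x)}\geq\rho\|\relgrad\|^2$ against a $-\frac{\rho}{4}\|\relgrad\|^2$ slack, use $2\beta\omega\constHlow^2\|h\|^2$ from the penalty, and spend Young's inequality on the $\omega$-weighted cross-terms $\inner{\rD h(x)^*[\lambda(x)]}{\nabla\cN(x)}$ (which is \emph{not} zero — I conflated two things above; $\inner{\rD h(x)^*[\lambda(x)]}{\rD h(x)^* h(x)}$ need not vanish) and $\inner{\rD\lambda(x)^*[h(x)]}{\nabla\cN(x)}$, bounding them by $\constHlow$, $\constH$, $\constLagrM$ and balancing against $\beta$.

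\textbf{Main obstacle.} The delicate part is the bookkeeping of the cross-terms in $\inner{\nabla\cL(x)}{\omega\nabla\cN(x)}$: identifying precisely which terms vanish by the orthogonality of $\relgrad$ and of $\nabla f(x) - \rD h(x)^*\lambda(x)$ to $\mathrm{range}(\rD h(x)^*)$, and then choosing the Young's-inequality weights on the surviving terms so that the three contributions $\frac{\rho}{4\constH^2}$, $\frac{\omega\constLagrM}{2\constH}$, $\frac{\constLagrM^2}{4\rho\constH^2}$ in the lower bound on $\beta$ appear exactly, leaving a clean $\frac{\rho}{2}(\|\relgrad(x)\|^2 + \|h(x)\|^2)$. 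The asymmetry between $\constHlow$ (lower singular value bound, used to get $\|\nabla\cN(x)\|^2 \geq \constHlow^2\|h(x)\|^2$ from below) and $\constH$ (upper bound, used on cross-terms from above) is what forces the specific constants, and getting that matching right is where essentially all the work lies.
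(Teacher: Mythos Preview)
Your overall decomposition matches the paper's, but there is a concrete gap. When you pair $\nabla\cL(x)$ with $\relgrad(x)$ you silently drop the cross-term
\[
-\inner{\rD\lambda(x)^*[h(x)]}{\relgrad(x)} \;=\; -\inner{h(x)}{\rD\lambda(x)[\relgrad(x)]}.
\]
This term is \emph{not} killed by Definition~\ref{def:relative_descent_direction}(i): the vector $\rD\lambda(x)^*[h(x)]$ has no reason to lie in $\mathrm{range}(\rD h(x)^*)$. In the paper's proof this term survives, is bounded by $\constLagrM\|\relgrad(x)\|\,\|h(x)\|$ via Assumption~\ref{ass:lagrange_mult}, and is precisely the term to which the AM--GM split (with weight $\alpha=\rho/\constLagrM$) is applied, yielding $\constLagrM\|\relgrad\|\,\|h\| \leq \frac{\rho}{2}\|\relgrad\|^2 + \frac{\constLagrM^2}{2\rho}\|h\|^2$. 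This is what drops the coefficient in front of $\|\relgrad\|^2$ from $\rho$ to $\rho/2$ and produces the $\frac{\constLagrM^2}{4\rho\constH^2}$ piece in the hypothesis on $\beta$. Your later paragraphs look for this slack in the wrong place.

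Conversely, your point (c) was right the first time and you should not have walked it back: $\nabla f(x) - \rD h(x)^*[\lambda(x)]$ is the projection of $\nabla f(x)$ onto $\ker\rD h(x)$, hence orthogonal to $\nabla\cN(x)\in\mathrm{range}(\rD h(x)^*)$, so $\inner{\nabla f(x)}{\nabla\cN(x)}$ and $\inner{\rD h(x)^*[\lambda(x)]}{\nabla\cN(x)}$ cancel exactly. After the dust settles the only surviving pieces are $\inner{\nabla f(x)}{\relgrad(x)}$, the combined term $-\inner{\rD\lambda(x)[\Lambda(x)]}{h(x)}$ (from both the $\relgrad$ and the $\omega\nabla\cN$ contributions), and $2\beta\omega\|\nabla\cN(x)\|^2$ --- exactly as in the paper. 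On your $\constH$ versus $\constHlow$ worry: the paper's argument bounds $\|\nabla\cN(x)\|\leq\constH\|h(x)\|$ throughout and never invokes $\constHlow$, which is why only $\constH$ appears in the stated condition on $\beta$.
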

The proof can be found in the appendices in~\autoref{proof:fletcher_inner}. This critical lemma shows that $\mathcal{L}$ is a valid merit function for the landing iterations and allows the study of convergence of the method with ease.

The following statement combines Lemma~\ref{lemma:fletcher_inner} with the bound on the step-size safeguard in Lemma~\ref{lemma:safe_step_lower} to prove sublinear convergence to critical points of $f$ subject to $\cM$.
\begin{theorem}[Convergence of the deterministic landing]\label{thm:landing_convergence}
    Under the above assumptions and with constant step-size $\eta$, the landing iteration in \eqref{eq:landing_iteration} starting from $x_0\in\cM^\varepsilon$ satisfies
    \begin{align*}
        \frac1K \sum_{k=0}^K \| \relgrad(x^k) \|^2 &\leq  4\frac{\cL(x^0) - \cL^*}{\eta \rho K},\\
        \frac1K\sum_{k=0}^K \| h(x^k)\|^2  &\leq 4\frac{\cL(x^0) - \cL^*}{\eta \rho K}.
    \end{align*}
    for $\eta \leq \min\left\{ \frac{\rho}{2L_\cL}, \frac{\rho}{2L_\cL \Rev{\omega^2}\constH^2}, \underline \eta \right\}$, where $\underline \eta$ comes from Lemma~\ref{lemma:safe_step_lower}, and  $\cL^* = \min_{x\in\cM^\varepsilon} \cL(x)$.
\end{theorem}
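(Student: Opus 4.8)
The plan is to run the standard descent-lemma argument on Fletcher's augmented Lagrangian $\cL$, treating it as a merit function, with the two extra wrinkles that (a) we must keep every iterate inside $\cM^\varepsilon$ so that all the assumptions and the smoothness bound on $\cL$ apply, and (b) the descent increment must be lower-bounded by $\|\relgrad(x^k)\|^2$ \emph{and} $\|h(x^k)\|^2$ simultaneously. First I would argue by induction that $x^k\in\cM^\varepsilon$ for all $k$: the base case is the hypothesis $x^0\in\cM^\varepsilon$, and the inductive step follows from Lemma~\ref{lemma:safe_step} together with Lemma~\ref{lemma:safe_step_lower}, since the chosen step-size $\eta\le\underline\eta\le\eta(x^k)$ guarantees the whole segment $[x^k,x^{k+1}]$ stays in $\cM^\varepsilon$. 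This is what legitimizes using the $L_\cL$-smoothness of $\cL$ on $\cM^\varepsilon$ in the next step.

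Next, since the segment lies in $\cM^\varepsilon$ and $\cL$ is $L_\cL$-smooth there, the descent lemma gives
\begin{equation*}
    \cL(x^{k+1}) \le \cL(x^k) - \eta \inner{\nabla\cL(x^k)}{\Lambda(x^k)} + \frac{L_\cL \eta^2}{2}\|\Lambda(x^k)\|^2.
\end{equation*}
For the linear term I would invoke Lemma~\ref{lemma:fletcher_inner} (whose hypothesis on $\beta$ is part of the standing assumptions), obtaining $\inner{\nabla\cL(x^k)}{\Lambda(x^k)} \ge \frac{\rho}{2}\bigl(\|\relgrad(x^k)\|^2 + \|h(x^k)\|^2\bigr)$. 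For the quadratic term I would use orthogonality of the two components of $\Lambda$, namely $\|\Lambda(x)\|^2 = \|\relgrad(x)\|^2 + \omega^2\|\nabla\cN(x)\|^2$, and then bound $\|\nabla\cN(x)\|^2 = \|\rD h(x)^* h(x)\|^2 \le \constH^2\|h(x)\|^2$ using Assumption~\ref{ass:h_constraint}, so that $\|\Lambda(x^k)\|^2 \le \|\relgrad(x^k)\|^2 + \omega^2\constH^2\|h(x^k)\|^2$. Substituting both bounds yields
\begin{equation*}
    \cL(x^{k+1}) \le \cL(x^k) - \eta\Bigl(\tfrac{\rho}{2} - \tfrac{L_\cL\eta}{2}\Bigr)\|\relgrad(x^k)\|^2 - \eta\Bigl(\tfrac{\rho}{2} - \tfrac{L_\cL\eta\,\omega^2\constH^2}{2}\Bigr)\|h(x^k)\|^2.
\end{equation*}
The two conditions $\eta\le\rho/(2L_\cL)$ and $\eta\le\rho/(2L_\cL\omega^2\constH^2)$ make each parenthesized factor at least $\rho/4$, giving the clean per-step decrease $\cL(x^{k+1}) \le \cL(x^k) - \frac{\eta\rho}{4}\bigl(\|\relgrad(x^k)\|^2 + \|h(x^k)\|^2\bigr)$.

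Finally I would telescope this inequality from $k=0$ to $K$: summing gives $\frac{\eta\rho}{4}\sum_{k=0}^{K}\bigl(\|\relgrad(x^k)\|^2 + \|h(x^k)\|^2\bigr) \le \cL(x^0) - \cL(x^{K+1}) \le \cL(x^0) - \cL^*$, where $\cL^* = \min_{x\in\cM^\varepsilon}\cL(x)$ is a valid lower bound precisely because all iterates lie in $\cM^\varepsilon$. Dropping the nonnegative $\|h(x^k)\|^2$ terms (resp.\ the $\|\relgrad(x^k)\|^2$ terms) and dividing by $K$ gives the two stated bounds. The main obstacle, and the only genuinely delicate point, is the invariance argument in the first step: one must be careful that the step-size safeguard $\eta(x^k)$ from Lemma~\ref{lemma:safe_step} is evaluated at the \emph{current} iterate yet must be uniformly lower-bounded by a constant $\underline\eta$ independent of $k$ — this is exactly what Lemma~\ref{lemma:safe_step_lower} provides (using Assumption~\ref{ass:rel_descent_bound} to control $\|\relgrad\|$ and hence $\|\Lambda\|$ uniformly), so the chain $\eta\le\underline\eta\le\inf_{x\in\cM^\varepsilon}\eta(x)\le\eta(x^k)$ closes the induction. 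Everything else is the routine descent-lemma-plus-telescoping computation.
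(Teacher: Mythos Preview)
Your proposal is correct and follows essentially the same route as the paper: invoke Lemmas~\ref{lemma:safe_step} and~\ref{lemma:safe_step_lower} to keep iterates in $\cM^\varepsilon$, apply the $L_\cL$-smoothness descent inequality, bound the linear term via Lemma~\ref{lemma:fletcher_inner} and the quadratic term via orthogonality plus $\|\nabla\cN(x)\|\le\constH\|h(x)\|$, then telescope. Your treatment of the safe-region invariance is in fact slightly more explicit than the paper's one-line justification.
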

The proof is given in~\autoref{proof:landing_convergence} and implies that the iterates $x^k$ converge to critical points with the sublinear rate $\bigO(1/K)$.

\subsection{Stochastic case\label{subsec:stochastic_case}}
Due to the smoothness of Fletcher's augmented Lagrangian in the $\cM^\varepsilon$ region, we can extend the convergence result to the stochastic setting. The iteration is
\begin{equation}
\label{eq:landing_random_general}
    x^{k+1}=x^k - \eta_k \left[\Lambda(x^k) + \tilde E(x^k, \Xi^k)\right],
\end{equation}
where the $\Xi^k$ are i.i.d.\ random variables, $\tilde E(x^k, \Xi^k)$ is the random error term at iteration $x^k$, and $\Lambda(x^k)$ is the landing field in \eqref{eq:landing_iteration}. We require the landing update in \eqref{eq:landing_random_general} to be an unbiased estimator with bounded variance.
\begin{assumption}[An unbiased estimator of $\Lambda(x^k)$ with bounded variance]\label{ass:noise}
    There exists $\var >0$ such that for all $x\in\cM^\varepsilon$, we have $\bE_\Xi[\tilde E(x, \Xi)] = 0$ and $\bE_\Xi[ \| \tilde E(x, \Xi)\|^2 ] \leq \gamma^2$.
\end{assumption}
We obtain the following result with decaying step-sizes.
\begin{theorem}[Convergence of the stochastic landing]\label{thm:landing_stochastic_convergence}
    Under the above assumptions, the stochastic landing iteration in \eqref{eq:landing_random_general} with a diminishing step-size $\eta_k = \eta_0 \times (1+k)^{-1/2}$, and assuming the line segments between the iterates remain within $\cM^\varepsilon$ with probability one, produces iterates for which
    \begin{align*}
        \inf_{k\leq K} \bE\left[ \| \relgrad(x^k) \|^2 \right] &\leq 4\frac{\cL(x^0) - \cL^*}{\rho \eta_0 \sqrt{K}} \\&\qquad+ \frac{ 2 \eta_0 \gamma^2 L_\cL (1 + \log(K+1))}{\rho\sqrt{K}},\\
        \inf_{k\leq K} \bE\left[ \| h(x) \|^2 \right] &\leq 4\frac{\cL(x^0) - \cL^*}{\rho \eta_0 \sqrt{K}} \\&\qquad+ \frac{ 2 \eta_0 \gamma^2 L_\cL (1 + \log(K+1))}{\rho\sqrt{K}},
    \end{align*}
    for $\eta_0 \leq  \frac{\rho}{2L_\cL} \min\left\{ 1, (\Rev{\omega}\constH)^{-2} \right\}$ and  $\cL^* = \min_{x\in\cM^\varepsilon} \cL(x)$.
\end{theorem}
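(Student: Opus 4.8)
The plan is to mimic the deterministic proof of Theorem~\ref{thm:landing_convergence}, but now carrying along the conditional expectation over the random error and handling the extra second-order term that the noise introduces. The backbone is again the merit function $\cL$, which is $L_\cL$-smooth on $\cM^\varepsilon$ by the Lipschitz proposition, so along the segment from $x^k$ to $x^{k+1} = x^k - \eta_k(\Lambda(x^k) + \tilde E(x^k,\Xi^k))$ — which stays in $\cM^\varepsilon$ by hypothesis — we get the descent inequality
\begin{equation*}
    \cL(x^{k+1}) \leq \cL(x^k) - \eta_k \inner{\nabla\cL(x^k)}{\Lambda(x^k) + \tilde E(x^k,\Xi^k)} + \frac{L_\cL}{2}\eta_k^2 \|\Lambda(x^k) + \tilde E(x^k,\Xi^k)\|^2.
\end{equation*}
First I would take the conditional expectation $\bE_{\Xi^k}[\cdot\mid x^k]$. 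By Assumption~\ref{ass:noise} the cross term $\bE_{\Xi^k}[\inner{\nabla\cL(x^k)}{\tilde E}] = 0$, so the linear part reduces to $-\eta_k\inner{\nabla\cL(x^k)}{\Lambda(x^k)}$, which Lemma~\ref{lemma:fletcher_inner} lower-bounds by $\frac{\rho}{2}(\|\relgrad(x^k)\|^2 + \|h(x^k)\|^2)$ provided $\beta$ is chosen as in that lemma. For the quadratic part, expand $\|\Lambda + \tilde E\|^2 = \|\Lambda\|^2 + 2\inner{\Lambda}{\tilde E} + \|\tilde E\|^2$; the middle term vanishes in expectation and $\bE_{\Xi^k}[\|\tilde E\|^2] \leq \gamma^2$, so it contributes at most $\frac{L_\cL}{2}\eta_k^2(\|\Lambda(x^k)\|^2 + \gamma^2)$. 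The remaining deterministic piece $\frac{L_\cL}{2}\eta_k^2\|\Lambda(x^k)\|^2$ is absorbed exactly as in the deterministic proof: using orthogonality of the two components of $\Lambda$, $\|\Lambda(x^k)\|^2 = \|\relgrad(x^k)\|^2 + \omega^2\|\nabla\cN(x^k)\|^2 \leq \|\relgrad(x^k)\|^2 + \omega^2 \constH^2\|h(x^k)\|^2$ (using $\nabla\cN = \rD h^* h$ and Assumption~\ref{ass:h_constraint}), and the step-size cap $\eta_0 \leq \frac{\rho}{2L_\cL}\min\{1, (\omega\constH)^{-2}\}$ (hence $\eta_k \leq \eta_0$) makes $\frac{L_\cL}{2}\eta_k \|\Lambda(x^k)\|^2 \leq \frac{\rho}{4}(\|\relgrad(x^k)\|^2 + \|h(x^k)\|^2)$, eating half of the gain from Lemma~\ref{lemma:fletcher_inner}.

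Putting these together yields, after taking full expectations and writing $\Phi^k := \|\relgrad(x^k)\|^2 + \|h(x^k)\|^2$,
\begin{equation*}
    \bE[\cL(x^{k+1})] \leq \bE[\cL(x^k)] - \frac{\rho}{4}\eta_k \bE[\Phi^k] + \frac{L_\cL \gamma^2}{2}\eta_k^2.
\end{equation*}
Rearranging, $\frac{\rho}{4}\eta_k\bE[\Phi^k] \leq \bE[\cL(x^k)] - \bE[\cL(x^{k+1})] + \frac{L_\cL\gamma^2}{2}\eta_k^2$, and summing $k=0,\dots,K$ telescopes the $\cL$ differences to $\cL(x^0) - \bE[\cL(x^{K+1})] \leq \cL(x^0) - \cL^*$. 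Since $\eta_k = \eta_0(1+k)^{-1/2}$ is decreasing, $\sum_{k=0}^K \eta_k \bE[\Phi^k] \geq \eta_K \sum_{k=0}^K \bE[\Phi^k] \geq \eta_K (K+1)\inf_{k\leq K}\bE[\Phi^k]$, and $\eta_K(K+1) = \eta_0(1+K)^{1/2} \geq \eta_0\sqrt{K}$; also $\sum_{k=0}^K \eta_k^2 = \eta_0^2\sum_{k=0}^K (1+k)^{-1} \leq \eta_0^2(1 + \log(K+1))$. Combining gives
\begin{equation*}
    \inf_{k\leq K}\bE[\Phi^k] \leq \frac{4(\cL(x^0)-\cL^*)}{\rho\eta_0\sqrt{K}} + \frac{2 L_\cL \gamma^2 \eta_0 (1+\log(K+1))}{\rho\sqrt{K}},
\end{equation*}
and since both $\bE[\|\relgrad(x^k)\|^2]$ and $\bE[\|h(x^k)\|^2]$ are each $\leq \bE[\Phi^k]$, the two claimed inequalities follow.

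The main obstacle is bookkeeping rather than conceptual: one must be careful that the step-size cap used here, $\eta_0 \leq \frac{\rho}{2L_\cL}\min\{1,(\omega\constH)^{-2}\}$, is exactly what is needed to dominate the $\|\Lambda\|^2$ term in both of its pieces ($\|\relgrad\|^2$ needs $\eta_k \leq \rho/(2L_\cL)$, the $\omega^2\constH^2\|h\|^2$ needs $\eta_k \leq \rho/(2L_\cL\omega^2\constH^2)$), and that Lemma~\ref{lemma:safe_step_lower}'s $\underline\eta$ does not reappear because here the staying-in-$\cM^\varepsilon$ property is assumed outright rather than derived. A secondary subtlety is the measurability/filtration argument: $x^k$ is a function of $\Xi^0,\dots,\Xi^{k-1}$, so the tower property legitimately turns the conditional bounds into the unconditional telescoping sum; and the ``$\inf_{k\le K}$'' (as opposed to the running average) is forced by the non-uniform step-sizes, exactly as in \citep[Theorem 5, Sec.~B]{Zhang2016Riemannian}. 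Finally I would double-check that Lemma~\ref{lemma:fletcher_inner}'s hypothesis on $\beta$ is compatible with the $L_\cL = L_{f+\lambda} + 2\beta L_\cN$ appearing in the step-size, i.e.\ that there is a consistent choice of $\beta$ and $\eta_0$ — this is the only place where the constants interlock and deserves an explicit remark.
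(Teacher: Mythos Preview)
Your proof is correct and follows essentially the same route as the paper's: descent lemma for the $L_\cL$-smooth merit function $\cL$ on the segment, conditional expectation to kill the noise cross terms, Lemma~\ref{lemma:fletcher_inner} for the inner product, absorption of $\tfrac{L_\cL}{2}\eta_k\|\Lambda\|^2$ via the step-size cap, then telescoping and the harmonic sum bound $\sum_{k\le K}(1+k)^{-1}\le 1+\log(K+1)$. The only cosmetic difference is in extracting the $\inf$: the paper uses the weighted-average inequality $\inf_k \bE[\|\relgrad(x^k)\|^2]\le \big(\sum_k\eta_k\bE[\|\relgrad(x^k)\|^2]\big)/\sum_k\eta_k$ together with $\sum_{k\le K}\eta_k\ge \eta_0\sqrt{K}$, applied separately to each of the two non-negative summands, whereas you lower-bound $\sum_k\eta_k\bE[\Phi^k]\ge \eta_K(K{+}1)\inf_k\bE[\Phi^k]$ and then observe that each component is dominated by $\Phi^k$; both yield the same constants.
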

The theorem is proved in \autoref{proof:landing_stochastic_convergence}. Unlike in the deterministic case in Lemma~\ref{lemma:safe_step}, without further assumption on the distribution of $\Xi^k$, it cannot be ensured that the line segments connecting the successive iterates are within $\cM^\varepsilon$ with probability one. Under that assumption, we recover the same convergence rate as Riemannian SGD in the non-convex setting for a deterministic feasible set \Rev{\citep[Theorem 5, Sec.\ B]{Zhang2016Riemannian}}, but in our case, we require only an online estimate of the random manifold feasible set.

\section{Landing on the Generalized Stiefel Manifold\label{sec:generalized_stiefel}}

This section builds on the results of the previous Section~\ref{sec:landing} and proves that the simple landing update rule $X^{k+1}= X^k - \eta_k \Lambda(X^k)$, defined in \eqref{eq:landing_generalized_stiefel}, converges to the critical points of \eqref{eq:optimization_generalized_stiefel}. The generalized Stiefel manifold $\stiefelG{B}$ is defined by the constraint function $h(X) = X^\top B X - I_p$, and we have $\nabla \mathcal{N}(X) = 2BX(X^TBX -I_p)$. \Rev{The specific forms of $\rD h(X)$ and $\lambda(X)$ can be found in  \autoref{app:specific_forms}.} We now derive the quantities required for Assumption~\ref{ass:h_constraint}. Recall that $\beta_i$ denotes the $i^{\mathrm{th}}$ eigenvalue of $B$ and $\kappa_B = \beta_1/\beta_n$ is the condition number of $B$.
\begin{proposition}[Smoothness constants for the generalized Stiefel manifold]\label{prop:gen_stiefel_constants}
    Smoothness constants in Assumption~\ref{ass:h_constraint} for the generalized Stiefel manifold are
    \begin{align*}
        \constH &= 2\sqrt{(1+\varepsilon)\beta_1\kappa_B}\\
        \constHlow&=2\sqrt{(1-\varepsilon)\beta_n\kappa_B^{-1}}\\
        L_\cN &= 2 \beta_1\left(\varepsilon + 2(1+\varepsilon)\kappa_B\right).
    \end{align*}
\end{proposition}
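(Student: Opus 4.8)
The plan is to bound the three constants directly from the constraint function $h(X) = X^\top B X - I_p$, exploiting that we only need estimates valid on the $\varepsilon$-safe region $\stiefelGeps{B} = \{X : \|X^\top B X - I_p\| \le \varepsilon\}$. The first step is to record the key geometric consequence of $X \in \stiefelGeps{B}$: since $\|X^\top B X - I_p\| \le \varepsilon$, the eigenvalues of the symmetric matrix $X^\top B X$ lie in $[1-\varepsilon, 1+\varepsilon]$, hence $\|X^\top B X\|_2 \le 1+\varepsilon$ and $(X^\top B X)^{-1}$ has operator norm at most $(1-\varepsilon)^{-1}$. Combining this with $\beta_n I_n \preceq B \preceq \beta_1 I_n$ gives two-sided control on $\|X\|_2$: from $\beta_n \|X\|_2^2 \le \|X^\top B X\|_2 \le 1+\varepsilon$ we get $\|X\|_2^2 \le (1+\varepsilon)/\beta_n$, and similarly a lower bound $\|X v\|^2 \ge (1-\varepsilon)/\beta_1$ for unit $v$, i.e. the smallest singular value of $X$ is at least $\sqrt{(1-\varepsilon)/\beta_1}$.

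Next I would compute the differential. Writing $h(X)[V] = \rD h(X)[V] = V^\top B X + X^\top B V = 2\,\sym(X^\top B V)$, so that $\rD h(X)$ maps $\bR^{n\times p}$ to the symmetric $p\times p$ matrices. For the upper bound $\constH$ on $\sigma(\rD h(X))$: $\|\rD h(X)[V]\| \le 2\|X^\top B V\| \le 2\|B\|_2\|X\|_2\|V\| \le 2\beta_1 \sqrt{(1+\varepsilon)/\beta_n}\,\|V\| = 2\sqrt{(1+\varepsilon)\beta_1\kappa_B}\,\|V\|$, which is exactly the claimed $\constH$. For the lower bound $\constHlow$ on the smallest singular value, one must show $\|\rD h(X)[V]\|$ is bounded below by $\constHlow\|V\|$ on the orthogonal complement of $\ker \rD h(X)$; the standard trick is to test against a well-chosen $V$ of the form $V = BX S$ (or $X S$) with $S$ symmetric — then $\rD h(X)[BXS]$ unfolds to $2\,\sym(X^\top B^2 X S)$ and one uses $X^\top B^2 X \succeq \beta_n X^\top B X \succeq \beta_n(1-\varepsilon) I_p$ together with $\|BX\|_2 \le \beta_1\sqrt{(1+\varepsilon)/\beta_n}$ to extract the factor $2\sqrt{(1-\varepsilon)\beta_n \kappa_B^{-1}}$. (A cleaner route may be to observe that on the normal space $\mathrm{range}(\rD h(X)^*)$ the operator $\rD h(X)\,\rD h(X)^*$ has smallest eigenvalue controlled by $\sigma_{\min}(X)^2$ and $\beta_n$.)

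Finally, for $L_\cN$ I would estimate the Lipschitz constant of $\nabla\cN(X) = 2BX(X^\top B X - I_p)$ on $\stiefelGeps{B}$ by bounding the operator norm of the Hessian, i.e. bounding $\|\rD(\nabla\cN)(X)[V]\|$ for $\|V\| \le 1$. Differentiating the product gives three terms: $2BV(X^\top B X - I_p) + 2BX(V^\top B X + X^\top B V)$; taking norms, the first contributes $\le 2\beta_1\varepsilon\|V\|$ using $\|X^\top B X - I_p\| \le \varepsilon$, and the remaining two contribute $\le 2\beta_1\|X\|_2 \cdot 2\|B\|_2\|X\|_2\|V\| \le 4\beta_1^2 (1+\varepsilon)/\beta_n\,\|V\| = 4\beta_1(1+\varepsilon)\kappa_B\|V\|$, summing to $2\beta_1(\varepsilon + 2(1+\varepsilon)\kappa_B)\|V\|$, the claimed $L_\cN$.

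The main obstacle is the lower bound $\constHlow$: unlike the other two constants, which are straightforward norm estimates via submultiplicativity, establishing $\sigma_{\min}(\rD h(X)) \ge \constHlow$ requires identifying the correct subspace on which $\rD h(X)$ is injective and producing a quantitative lower bound there, which is where the interplay between $\sigma_{\min}(X)$, the spectrum of $B$, and the safe-region bound $\varepsilon$ all enter simultaneously; getting the constant to come out as exactly $2\sqrt{(1-\varepsilon)\beta_n\kappa_B^{-1}}$ rather than something weaker will need the test-direction choice above to be essentially optimal.
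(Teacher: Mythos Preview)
Your proposal is correct and reaches the same constants by essentially the same estimates, but you are making the lower bound $\constHlow$ harder than it needs to be. The paper's key simplification is to work from the outset with the adjoint $\rD h(X)^*[S] = 2BXS$ (for $S\in\sym(p)$): since an operator and its adjoint share the same nonzero singular values, both $\constH$ and $\constHlow$ reduce to bounding the extremal singular values of the matrix $2BX$, which follow immediately from the product estimates $\sigma_i(BX)\in[\beta_n\,\sigma_{\min}(X),\,\beta_1\,\sigma_{\max}(X)]$ combined with your singular-value bounds on $X$. This is exactly your parenthetical ``cleaner route'', and it removes the need for any test-direction argument with $V=BXS$. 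For $L_\cN$, the paper bounds $\|\nabla\cN(X)-\nabla\cN(Y)\|$ directly by the add--subtract trick rather than via the Hessian, but the three resulting terms and the final constant coincide with yours; your Hessian-norm route is arguably more transparent. One incidental difference: your derivation of $\sigma_{\min}(X),\sigma_{\max}(X)$ from the operator inequalities $\beta_n X^\top X \preceq X^\top BX \preceq \beta_1 X^\top X$ is more elementary than the paper's SVD/eigendecomposition argument, yet yields the identical interval $[\sqrt{(1-\varepsilon)/\beta_1},\sqrt{(1+\varepsilon)/\beta_n}]$.
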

The proof deriving $\constH, \constHlow$ is presented in~\autoref{proof:gen_stiefel_constants} and smoothness constant $L_\cN$ comes from \cref{lemma:gevp_lipschitz}.

We show two candidates for the relative ascent direction:
\begin{proposition}[Relative ascent directions for the generalized Stiefel manifold]\label{prop:generalized_stiefel_relative_descent}
    The following two formulas are relative ascent directions on the generalized Stiefel manifold:
    \begin{align*}
        \relgrad_B(X) &= 2\sk (\nabla f(X)X^\top B) B X\\
        \relgrad^{\mathrm{R}}_B(X) &= 2\sk (B^{-1} \nabla f(X)X^\top) B X
    \end{align*}
    with $\relgrad_B(X)$ having $\rho_B = 1/(\beta_1 \kappa_B (1+\varepsilon))$ and $\relgrad^\mathrm{R}_B(X)$ having $\rho^\mathrm{R}_B =\beta_n/(1+\varepsilon)$.
\end{proposition}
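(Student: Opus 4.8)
The plan is to verify conditions (i)--(iii) of Definition~\ref{def:relative_descent_direction} for each of the two candidate formulas, and along the way extract the constants $\rho_B$ and $\rho_B^{\mathrm{R}}$. The three conditions are: tangency to every layered manifold $\stiefelG{B}_c$ (equivalently, orthogonality to $\mathrm{range}(\rD h(X)^*)$), a quadratic lower bound on the inner product with $\nabla f(X)$, and the critical-point characterization on $\stiefelG{B}$ itself. I would first record the elementary identities that drive everything: for $h(X) = X^\top B X - I_p$ one has $\rD h(X)[V] = V^\top B X + X^\top B V = 2\,\sym(X^\top B V)$, so $\rD h(X)^*[S] = 2 B X S$ for symmetric $S$, i.e.\ $\mathrm{range}(\rD h(X)^*) = \{ 2 B X S : S = S^\top \}$. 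Dually, the tangent space $\mathrm{T}_X\stiefelG{B}_c$ is $\{ V : \sym(X^\top B V) = 0\}$.

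For condition (i): write $\relgrad_B(X) = 2\,\sk(\nabla f(X) X^\top B) B X$. Pairing with a normal vector $2 B X S$ (with $S$ symmetric), $\inner{2\,\sk(A) B X}{2 B X S} = 4\,\inner{\sk(A) B X}{B X S} = 4\,\Tr(S^\top X^\top B\, \sk(A) B X)$; since $X^\top B \sk(A) B X$ is skew-symmetric (it is of the form $M \sk(A) M^\top$ with $M = X^\top B$, and $\sk(A)$ skew, hence the product is skew) and $S$ is symmetric, the trace vanishes. The same argument works for $\relgrad_B^{\mathrm{R}}(X) = 2\,\sk(B^{-1}\nabla f(X) X^\top) B X$: here the relevant matrix is $X^\top B \sk(B^{-1}\nabla f(X)X^\top) B X$, still skew, so it again annihilates symmetric $S$. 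This establishes (i) for both.

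For condition (ii): compute $\inner{\relgrad_B(X)}{\nabla f(X)}$. Using the cyclic property and $\inner{\sk(A)}{\cdot}$ picking out the skew part, $\inner{2\,\sk(\nabla f(X)X^\top B)BX}{\nabla f(X)} = 2\,\inner{\sk(\nabla f(X)X^\top B)}{\nabla f(X) X^\top B} = 2\,\|\sk(\nabla f(X) X^\top B)\|^2$. The Riemannian gradient of $f$ on $\stiefelG{B}_c$ turns out (this is where the specific forms in \autoref{app:specific_forms} and Proposition~\ref{prop:riemannian_gradient} come in) to be expressible via the same skew-symmetric quantity; concretely $\grad f(X) = 2\,\sk(\nabla f(X) X^\top B) B X \cdot B^{-1}$-type correction, so that $\|\relgrad(x)\|^2 = \|\Psi(x)\|^2$ appears on the right and one needs to bound $\|\sk(\nabla f(X)X^\top B)\|^2$ from below by $\rho_B \|\relgrad_B(X)\|^2$. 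Writing $\relgrad_B(X) = 2\,\sk(\nabla f(X)X^\top B) B X$, bound $\|\relgrad_B(X)\| = 2\|\sk(\nabla f(X)X^\top B) B X\| \le 2\|\sk(\nabla f(X)X^\top B)\|\,\|BX\|_2$, and on $\stiefelG{B}_c$ with $\|c\|\le\varepsilon$ one has $\|BX\|_2^2 = \|B^{1/2}(B^{1/2}X)\|_2^2 \le \beta_1 \|B^{1/2}X\|_2^2 = \beta_1\|X^\top B X\|_2 \le \beta_1(1+\varepsilon)$; combining with $\beta_1 = \beta_1$, and absorbing a factor $\kappa_B$ that arises from relating $\|\Psi(x)\|^2$ (which is the \emph{Riemannian} gradient norm, involving $B^{-1}$ weighting) to $\|\sk(\cdots)\|^2$, yields $\rho_B = 1/(\beta_1\kappa_B(1+\varepsilon))$. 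For $\relgrad_B^{\mathrm{R}}$ the computation is parallel but the $B^{-1}$ inside makes the bound cleaner: $\inner{\relgrad_B^{\mathrm{R}}(X)}{\nabla f(X)} = 2\,\inner{\sk(B^{-1}\nabla f(X)X^\top)}{\nabla f(X)X^\top} = 2\,\|\sk(B^{-1}\nabla f(X)X^\top)\|_{?}^2$ in a suitable (possibly $B$-weighted) norm, and bounding $\|\relgrad_B^{\mathrm{R}}(X)\|\le 2\|\sk(B^{-1}\nabla f(X)X^\top)\|\,\|BX\|_2$ with $\|BX\|_2^2\le\beta_1(1+\varepsilon)$ but now with the favorable $B^{-1}$ already present gives $\rho_B^{\mathrm{R}} = \beta_n/(1+\varepsilon)$. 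Condition (iii) is then immediate: on $\stiefelG{B}$, $\inner{\relgrad(X)}{\nabla f(X)} = 2\|\sk(\cdots)\|^2 = 0$ iff the relevant skew part vanishes, which by the formula for $\grad f(X)$ is exactly the condition $\grad f(X) = 0$, i.e.\ $X$ critical.

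\textbf{Main obstacle.} The delicate point is condition (ii): one must pin down exactly which inner product / norm makes $\Psi(x)$ equal to the Riemannian gradient on the layered manifold $\stiefelG{B}_c$ (not just on $\stiefelG{B}$), because the normalization of $\rho$ is not scale-invariant, and then carefully track how the $B$- and $B^{-1}$-weightings convert $\|\sk(\nabla f(X)X^\top B)\|^2$ (which is what the inner product with $\nabla f$ naturally produces) into $\rho \cdot \|\relgrad(x)\|^2$ with $\|\relgrad(x)\|$ the \emph{unweighted} Frobenius norm as used in Assumption~\ref{ass:rel_descent_bound} and the convergence theorems. Getting the powers of $\beta_1,\beta_n,\kappa_B$ and the factor $(1+\varepsilon)$ exactly right — and checking that the bound $\|B^{1/2}X\|_2^2 \le 1+\varepsilon$ holds uniformly over $\stiefelGeps{B}$ rather than just on $\stiefelG{B}$ — is the bookkeeping-heavy heart of the argument; the tangency and critical-point conditions are comparatively routine once the skew-symmetry observation is in hand.
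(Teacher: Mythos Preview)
Your treatment of condition~(i) is correct and matches the paper: the observation that $X^\top B\,\sk(A)\,BX$ is skew-symmetric, hence Frobenius-orthogonal to every symmetric $S$, is exactly the mechanism used.

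Conditions~(ii) and~(iii) contain genuine gaps. For~(ii), the confusion is that $\|\Psi(x)\|$ in Definition~\ref{def:relative_descent_direction}(ii) is simply the Frobenius norm of the candidate direction $\Psi_B(X)$ itself; no Riemannian or $B^{-1}$-weighted norm enters anywhere. From your own (correct) identities $\inner{\Psi_B(X)}{G}=2\|\sk(GX^\top B)\|^2$ and $\|\Psi_B(X)\|^2\le 4\|\sk(GX^\top B)\|^2\,\|BX\|_2^2$, the constant $\rho_B$ is determined solely by a bound on $\|BX\|_2^2$. The $\kappa_B$ in the paper's stated $\rho_B$ does \emph{not} come from a weighted norm; it comes from the looser estimate $\|BX\|_2\le\|B\|_2\|X\|_2\le\beta_1\sqrt{(1+\varepsilon)/\beta_n}=\sqrt{(1+\varepsilon)\beta_1\kappa_B}$ (via the singular-value bound in the proof of Proposition~\ref{prop:gen_stiefel_constants}). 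Your attempt to manufacture $\kappa_B$ from a Riemannian weighting is the wrong mechanism. For $\Psi_B^{\mathrm R}$ you are too vague: with $M=B^{-1}GX^\top$ the paper computes $\inner{\Psi_B^{\mathrm R}(X)}{G}=2\inner{\sk(M)}{BMB}=2\inner{\sk(M)}{B\,\sk(M)\,B}\ge 2\beta_n^2\|\sk(M)\|^2$, then bounds $\|\Psi_B^{\mathrm R}(X)\|^2$ by $\|BX\|_2^2$ as before; this is a concrete inequality, not ``a suitable (possibly $B$-weighted) norm''.

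For~(iii), you hand-wave the non-trivial direction. On $\stiefelG{B}$, one must show that $\Psi_B(X)=0$ (equivalently $\sk(GX^\top B)BX=0$) forces $G\in\mathrm{range}(\rD h(X)^*)=\{BXS:S=S^\top\}$. The paper does this explicitly: from $GX^\top B^2X=BXG^\top BX$ and invertibility of $X^\top B^2X$ one solves $G=BX\cdot G^\top BX(X^\top B^2X)^{-1}$, and then verifies (by left-multiplying the original equation by $(X^\top B^2X)^{-1}X^\top B$ and right-multiplying by $(X^\top B^2X)^{-1}$) that the factor $G^\top BX(X^\top B^2X)^{-1}$ is symmetric. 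Your appeal to ``the formula for $\grad f(X)$'' does not substitute for this algebraic verification.
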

The proof is given in~\autoref{proof:generalized_stiefel_relative_descent}. The formula for the relative ascent $\relgrad_B^\mathrm{R}(X)$ can be derived as a Riemannian gradient for $\stiefelG{B}$ in a metric derived from a canonical metric on the standard Stiefel manifold via a specific isometry; see~\autoref{app:canonical_reldescent}.

\subsection{Deterministic generalized Stiefel case\label{subsec:deterministic_gstiefel_case}}
The fact that $\relgrad_B(X)$ above meets the conditions of Definition~\ref{def:relative_descent_direction} allows us to define the deterministic landing iterations as $X^{k+1} = X^k - \eta^k \Lambda(X^k)$ with 
\begin{multline}\label{eq:deterministic_landing_stiefel}
    \Lambda(X) = 2\,\sk (\nabla f(X)X^\top B) B X \\+ 2\omega BX(X^TBX -I_p),
\end{multline}
and \autoref{thm:landing_convergence} applies to these iterations, showing that they converge to critical points.

\subsection{Stochastic generalized Stiefel case}
One of the main features of the formulation in \eqref{eq:deterministic_landing_stiefel} is that it seamlessly extends to the stochastic case when both the objective $f$ and the constraint matrix $B$ are expectations.
Indeed, using the stochastic estimate $\Lambda_{\xi, \zeta, \zeta'}$ defined in \eqref{eq:landing_generalized_stiefel}, we have $\mathbb{E}_{\xi, \zeta, \zeta'}[\Lambda_{\xi, \zeta, \zeta'}(X)] = \Lambda(X)$. 
The stochastic landing iterations are, therefore, of the same form as in \eqref{eq:landing_random_general}. 
To apply \autoref{thm:landing_stochastic_convergence} we need to bound the variance of $\tilde{E}(X, \Xi) = \Lambda_{\xi, \zeta, \zeta'}(X) - \Lambda(X)$ where the random variable $\Xi$ is the triplet $(\xi, \zeta, \zeta')$ using standard U-statistics techniques~\citep{van2000asymptotic}.

\begin{proposition}[Variance estimation of the generalized Stiefel landing iteration]
\label{prop:variance_landing}
    Let $\sigma_B^2$ be the variance of $B_{\zeta}$ and $\sigma_G^2$ the variance of $\nabla f_\xi(X)$. We have that
    \begin{equation}
        \begin{gathered}
            \mathbb{E}_\Xi[\|\tilde{E}(X, \Xi)\|^2] \leq \sigma_G^2 \alpha_G + \sigma_B^2(\alpha_B + \omega^2 \gamma_B)
        \end{gathered}\label{eq:variance_landing}
    \end{equation}
    where the constants $\alpha_G, \alpha_B, \gamma_B$ are given explicitly in \autoref{proof:variance_landing}, and depend only on $\varepsilon$, the distribution of $B_\zeta$, and the function $f$.
\end{proposition}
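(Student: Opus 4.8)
The plan is to bound $\bE_\Xi[\|\tilde E(X,\Xi)\|^2]$ by decomposing the landing field estimator $\Lambda_{\xi,\zeta,\zeta'}$ into its relative-ascent part $\relgrad_{\xi,\zeta,\zeta'}(X)=2\sk(\nabla f_\xi(X)X^\top B_\zeta)B_{\zeta'}X$ and its penalty-gradient part $\omega\nabla\cN_{\zeta,\zeta'}(X)=2\omega B_{\zeta'}X(X^\top B_\zeta X-I_p)$, and controlling the variance of each. Since $\tilde E = \Lambda_{\xi,\zeta,\zeta'}-\Lambda = (\relgrad_{\xi,\zeta,\zeta'}-\relgrad) + \omega(\nabla\cN_{\zeta,\zeta'}-\nabla\cN)$, I would first write $\bE[\|\tilde E\|^2] = \bE[\|\relgrad_{\xi,\zeta,\zeta'}-\relgrad\|^2] + 2\omega\,\bE[\inner{\relgrad_{\xi,\zeta,\zeta'}-\relgrad}{\nabla\cN_{\zeta,\zeta'}-\nabla\cN}] + \omega^2\,\bE[\|\nabla\cN_{\zeta,\zeta'}-\nabla\cN\|^2]$; the cross term can either be kept and bounded by Cauchy--Schwarz, or, more cleanly, one can simply use $\|a+b\|^2\le 2\|a\|^2+2\|b\|^2$ or track it into the stated constants $\alpha_G,\alpha_B,\gamma_B$. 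The key structural tool is that $\nabla f_\xi$ depends only on $\xi$, and $B_\zeta,B_{\zeta'}$ are i.i.d.\ copies, so this is exactly a (degree-two) U-statistic-type estimator: writing $\nabla f_\xi = \nabla f + \delta_G$ and $B_\zeta = B + \delta_B$, $B_{\zeta'} = B + \delta_B'$ with $\bE[\delta_G]=0$, $\bE[\delta_B]=\bE[\delta_B']=0$, and $\delta_G\perp\delta_B\perp\delta_B'$, one expands each product and every surviving term after taking expectations must contain at least one noise factor, whose second moment is $\sigma_G^2$ or $\sigma_B^2$.

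Concretely, for the penalty term I would substitute $B_{\zeta'} = B+\delta_B'$ and $X^\top B_\zeta X - I_p = (X^\top BX - I_p) + X^\top\delta_B X = h(X) + X^\top\delta_B X$, multiply out $2(B+\delta_B')X(h(X)+X^\top\delta_B X)$, subtract the mean $2BXh(X)$, and take the expected squared norm. Using independence, the surviving terms are those with $\delta_B'$ times the deterministic part, $B$ times $\delta_B$, and $\delta_B'$ times $\delta_B$; each is bounded using $\|X\|^2\le(1+\varepsilon)/\beta_n$ on $\cM^\varepsilon$ (from $\|X^\top BX\|\le 1+\varepsilon$ and $B\succeq\beta_n I$), $\|B\|_2=\beta_1$, $\|h(X)\|\le\varepsilon$, and $\bE[\|\delta_B\|_2^2]\le\sigma_B^2$ (one has to be a little careful whether $\sigma_B^2$ refers to a Frobenius or operator-norm variance, but the paper's "variance of $B_\zeta$" presumably fixes this; I would just push the $X$-dependent quantities to the boundary of $\cM^\varepsilon$ and collect them). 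This yields the $\sigma_B^2\,\omega^2\gamma_B$ contribution with $\gamma_B$ a polynomial in $\varepsilon$, $\beta_1$, $\beta_n$ (hence $\kappa_B$). The relative-ascent term is handled the same way: in $2\sk(\nabla f_\xi(X)X^\top B_\zeta)B_{\zeta'}X$ substitute all three perturbations, expand, subtract $\relgrad(X)=2\sk(\nabla f(X)X^\top B)BX$, and observe every residual monomial carries a $\delta_G$ or a $\delta_B$/$\delta_B'$ factor; grouping the $\delta_G$-terms (with $B,B$ or $B,\delta_B'$ etc.\ deterministic-in-expectation-squared factors) gives $\sigma_G^2\alpha_G$, and the $\delta_B$/$\delta_B'$-terms (with $\nabla f$ or $\delta_G$ multiplying) give $\sigma_B^2\alpha_B$; the norms of $\nabla f(X)$ on $\cM^\varepsilon$ are bounded by Assumption~\ref{ass:fgrad_lipschitz} plus compactness (or one absorbs $\sup_{\cM^\varepsilon}\|\nabla f\|$ into $\alpha_G,\alpha_B$), and $\|\sk(\cdot)\|\le\|\cdot\|$.

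The main obstacle I expect is purely bookkeeping: there are on the order of a dozen cross-terms once all three independent perturbations $\delta_G,\delta_B,\delta_B'$ are expanded in the product of three (resp.\ two) matrices, and one must verify that the independence structure kills exactly the no-noise and the wrong-pairing terms, leaving a clean bound. A secondary subtlety is that $B_\zeta$ and $B_{\zeta'}$ enter asymmetrically (one inside the skew bracket / inside $h$, one outside multiplying $X$), so the estimator is not symmetric in $(\zeta,\zeta')$ and strictly speaking it is a generalized U-statistic rather than a textbook one; however for a variance \emph{upper} bound this asymmetry is harmless — one just bounds each term by its operator-norm factors and does not need the variance-reduction sharpness of a genuine U-statistic. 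I would also need to record mild moment assumptions (finite second moments of $B_\zeta$ and of $\nabla f_\xi(X)$ uniformly on $\cM^\varepsilon$, and ideally finite fourth moments if one wants the cross-term $\bE[\|\delta_G\|^2\|\delta_B\|^2]$ to factor as $\sigma_G^2\sigma_B^2$ rather than be bounded crudely), and I would state the explicit constants $\alpha_G,\alpha_B,\gamma_B$ in the appendix as the polynomials in $\varepsilon,\beta_1,\beta_n,\sup_{\cM^\varepsilon}\|\nabla f\|$ that fall out of the expansion, exactly as the proposition promises.
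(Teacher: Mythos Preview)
Your proposal is correct and follows essentially the same strategy as the paper: split $\tilde E$ into the $\relgrad$-part and the $\omega\nabla\cN$-part via $\mathrm{Var}[a+b]\le 2(\mathrm{Var}[a]+\mathrm{Var}[b])$, then control each variance using the independence of $(\xi,\zeta,\zeta')$ together with the a~priori bounds $\|X\|_2^2\le(1+\varepsilon)/\beta_n$, $\|B\|_2=\beta_1$, $\|h(X)\|\le\varepsilon$, and a bound $\Delta$ on $\|\nabla f(X)X^\top\|_2^2$ inside $\cM^\varepsilon$. The only presentational difference is that the paper organizes each variance by sequential telescoping---replacing one random variable at a time by its mean (the standard U-statistic variance decomposition you allude to)---rather than your full perturbation expansion $B_\zeta=B+\delta_B$, etc.; this keeps the number of terms to two (resp.\ three) per part and makes the cross-terms vanish automatically by conditional independence, sparing you the ``dozen cross-terms'' bookkeeping and the worry about fourth moments, but the content and the resulting constants are the same.
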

The proof is found in \autoref{proof:variance_landing}. Note that, as expected, the variance in \eqref{eq:variance_landing} is zero in the deterministic setting where both variances $\sigma_B$ and $\sigma_G$ are zero. A consequence of Proposition~\ref{prop:variance_landing} is that \autoref{thm:landing_stochastic_convergence} applies in the case of the stochastic landing method on the generalized Stiefel manifold, and more specifically, also for solving the stochastic GEVP.

\section{Numerical Experiments\label{sec:numerics}}
\paragraph{Generalized eigenvalue problem.}
We compare the methods on the deterministic top-$p$ GEVP that consists of solving $\min_{X\in\bR^{n\times p}} -\frac12 \Tr(X^\top A X)$ for $X\in\stiefelG{B}$. The two matrices are randomly generated with a condition number $\kappa_A = \kappa_B = 100$ and with the size $n = 1000$ and $p=500$; see further specifics in~\autoref{app:experiments}.\footnote{The code is available at: \url{https://github.com/simonvary/landing-generalized-stiefel}.}

Figure~\ref{fig:1_gevp} shows the timings of four methods with fixed step-size: Riemannian steepest descent with QR-based Cholesky retraction \citep{Sato2019Cholesky}, the PLAM method \citep{Gao2022orthogonalization}, and the two landing methods with either $\relgrad^\mathrm{R}_B(X)$ or $\relgrad_B(X)$ in Proposition~\ref{prop:generalized_stiefel_relative_descent}. The landing method with $\relgrad_B(X)$ converges the fastest in terms of time, due to its cheap per-iteration computation, which is also demonstrated in Figure~\ref{fig:1_gevp_iter} and Figure~\ref{fig:3_cost} in the appendices. It can be also observed that the landing method with $\relgrad_B(X)$ is more robust to the choice of parameters $\eta$ and $\omega$ compared to PLAM, which we show in Figure~\ref{fig:4_gevp_sensitivity} and Figure~\ref{fig:5_landing_sensitivity} in the appendices, and is in line with the equivalent observations previously made for the orthogonal manifold \citep[Figure~9]{Ablin2022Fast}. In Figure~\ref{fig:1_gevp_safestep} in the appendices we track numerically the value of the step-size safeguard $\eta(X)$ in Lemma~\ref{lemma:safe_step}.
\begin{figure}[t]
    \begin{center}
    \includegraphics[width=0.4\textwidth]{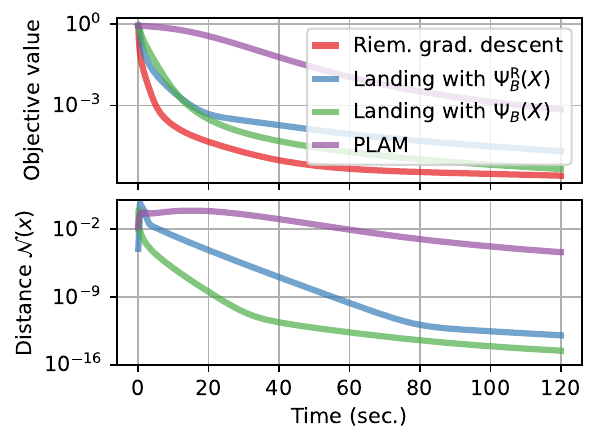}
    \end{center}
    \vspace{-1.5em}
    \caption{Generalized eigenvalue problem ($n = 1000, p = 500$).\label{fig:1_gevp}}
  \begin{center}
    \includegraphics[width=0.4\textwidth]{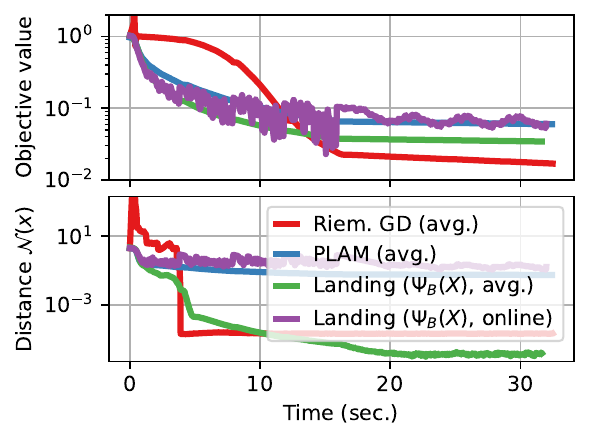}
  \end{center}
  \vspace{-1.5em}
  \caption{Stochastic CCA on the split MNIST dataset for $p=5$. An epoch takes roughly $2.5$~sec. \label{fig:2_cca_mnist}}
\end{figure}

\paragraph{Stochastic CCA and ICA.}

For stochastic CCA, we use the benchmark problem used by \citet{Ma2015Finding,Wang2016Efficient}, in which the MNIST dataset is split in half by taking left and right halves of each image, and compute the top-$p$ canonical correlation components by solving \eqref{eq:optimization_cca}. In our experiments, we have $N = 60\,000$, $n = 392 $, $p=5$, and $r = 512$.

The stochastic ICA is performed by solving~\cite{hyvarinen1999fast,ablin2018faster}
\begin{equation*}
    \min_{X\in\bR^{n\times n}} \frac1N \sum_{i=1}^N \sum_{j=1}^n \sigma([AX]_{i,j}),\, \mathrm{s.\,t.}\, X\in\mathrm{St}_{\frac1N A^\top A}(n,n)
\end{equation*}
where $\sigma(x) = \log(\mathrm{cosh}(x))$ is performed elementwise and $\sigma'(x) = \mathrm{tanh}(x)$. We generate the data matrix $A$ as $A = SW^\top$, where $S$ is a $N\times n$ matrix of random i.i.d.\ data sampled from a Laplace distribution and $W$ is a $n\times n$ random orthogonal matrix. We take $N = 100\,000$ and $n = 10$.
By solving the above optimization problem, the goal of ICA is to recover the mixing matrix $W$, up to scaling and permutations invariances; to monitor this we track the Amari distance~\cite{amari1995new} between $X$ and $W^{-1}$.

Figure~\ref{fig:2_cca_mnist} and Figure~\ref{fig:6_ica} show the timings for the Riemannian gradient descent with rolling averaged covariance matrix and the landing algorithm with $\relgrad_B(X)$ in its online and averaged form for the CCA and the ICA experiment respectively. The averaged methods keep track of the covariance matrices during the first pass through the dataset, which is around $3$ sec.~and $0.6$ sec.~respectively, after which they have the exact fully sampled covariance matrices. The online methods have always only the sampled estimate with the batch size of $r=512$. All methods use the fixed step-size $\eta = 0.1$, and the landing methods have $\omega = 1$. In practice, the hyperparameters can be picked by grid-search as is common for stochastic optimization methods.

The online landing method outperforms the averaged Riemannian gradient descent in the online setting in terms of the objective value after only a few passes over the data, e.g.,~at the~$3$ sec.~mark and the~$0.6$ sec.~mark respectively in Figure~\ref{fig:2_cca_mnist} and Figure~\ref{fig:6_ica}, which corresponds to the first epoch, at which point each sample appeared just once. After the first epoch, the rolling avg.~methods get the advantage of the exact fully sampled covariance matrix and, consequently, have better distance $\mathcal{N}(X)$, but at the cost of requiring $\bigO(n^2)$ memory for the full covariance matrix. The online method does not form $B$ and requires only  $\bigO(n(p+r))$ memory. The behavior is also consistent when $p=10$ as shown in Figure~\ref{fig:2_cca_mnist_p10} in the appendices.

\begin{figure}
    \centering
        \includegraphics[width=0.42\textwidth]{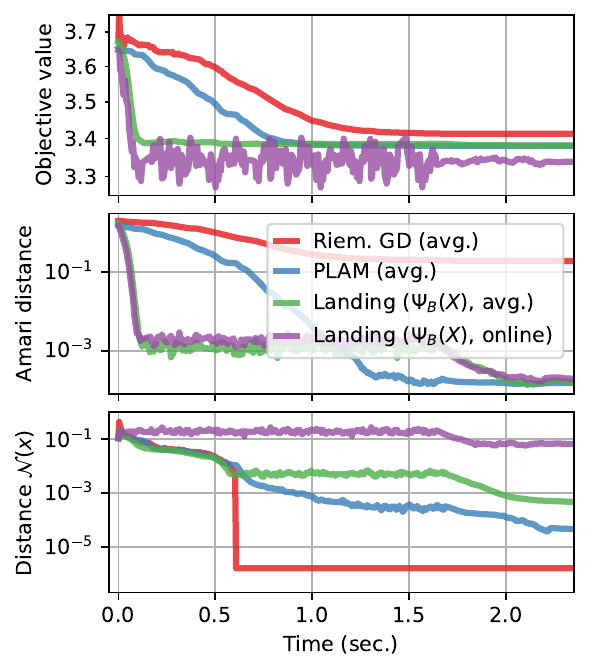}
    \vspace{-1em}
    \caption{Stochastic ICA on the synthetic dataset for $n=10$.\label{fig:6_ica}}
\end{figure}

\section{Conclusion} 
We have extended the theory of the landing method from the Stiefel manifold to the general case of a feasible set defined by smooth equations $h(x)=0$. We have improved the existing analysis by using a smooth merit function, which allows us to also consider situations where we have only random estimates of the manifold. We have showed that the random generalized Stiefel manifold, which is central to problems such as stochastic CCA, ICA, and the GEVP, falls into the category of random manifold feasible set and derived specific bounds for it. 


\subsubsection*{Acknowledgments}

This work was supported by the Fonds de la Recherche Scientifique-FNRS under Grant no T.0001.23. 
Simon Vary was supported by the FSR Incoming Post-doctoral Fellowship, the Incentive Grant
for Scientific Research (MIS) ``Learning from Pairwise Data'' of the
F.R.S.-FNRS, and by the UK Research and Innovation (UKRI) under the UK government’s Horizon Europe funding guarantee [grant number EP/Y028333/1]. Bin Gao was supported by the Young Elite Scientist Sponsorship Program by CAST and the National Natural Science Foundation of China (grant No.~12288201).


\section*{Impact Statement} This paper presents theoretical work which aims to advance the field of machine learning. There is no broad impact other than the consequences discussed in the paper.

\bibliography{references}
\bibliographystyle{icml2024}

\newpage
\appendix
\onecolumn

\section{Summary of Retractions on the Generalized Stiefel Manifold}
\label{app:retractions}

For an update to a matrix $X\in\stiefelG{B}$ following a direction in the tangent space $Z\in\mathrm{T}_X\stiefelG{B}$ (see Appendix~\ref{app:canonical_reldescent} for an expression of $\mathrm{T}_X\stiefelG{B}$), there are several ways to compute a retraction. The following asymptotic flop counts provide a simplified picture of computational cost: they do not reflect opportunities for parallelism and assume no structure on matrix $B$.
\begin{itemize}
    \item {The \emph{Polar decomposition}~\citep{Yger2012Adaptive} uses
    \begin{equation*}
        \Retr_{\mathrm{St}_B}(X, Z) = (X+Z)\left( I_p + Z^\top B Z\right)^{-1/2},
    \end{equation*}
    involving the multiplication of $B$ by an $n\times p$ matrix and the computation of the inverse matrix square root of a $p\times p$ matrix, which in naive implementation} amounts to $\bigO(n^2p)$ flops.
    \item{\citet{Mishra2016Riemannian} observed that the aforementioned polar decomposition can be expressed as $UV^\top$ in terms of an SVD-like decomposition of $X+Z = U\Sigma V^\top$, where $U,V$ are orthogonal with respect to $B$-inner product, whose main cost is the eigendecomposition of $(X+Z)^\top B (X+Z)$.}
    \item{Recently, \citet{Sato2019Cholesky} proposed the \emph{Cholesky-QR based retraction}
        \begin{equation*}
            \Retr_{\mathrm{St}_B}(X, Z) = (X+Z)R^{-1},
        \end{equation*}
        where $R\in\bR^{p\times p}$ comes from the Cholesky factorization of $R^\top R = (X+Z)^\top B (X+Z)$. The flops required for the computation, in naive implementation, amount to $\bigO(n^2p)$, which comes from the matrix multiplications. The Cholesky factorization of an $p\times p$ matrix and the inverse multiplication by a small triangular $p\times p$ matrix requires $\bigO(p^3)$ to form and $\bigO(np^2)$ to multiply with.}
\end{itemize}

\section{Additional Experiments and Figures}\label{app:experiments}

For the experiment showed in Fig.~\ref{fig:1_gevp}, we generate the matrix $A\in\bR^{n\times n}$ to have equidistant eigenvalues $\lambda_i(A)\in[1/\kappa_A, 1]$ and $B\in\bR^{n\times n}$ has exponentially decaying eigenvalues $\lambda_i(B)\in[1/\kappa_B, 1]$. We pick the step-size $\eta$ parameter to be $\eta=0.01$ for the Riemannian gradient descent, the landing with $\relgrad_B^\mathrm{R}(X)$, and PLAM, and $\eta=200$ for the landing with $\relgrad_B(X)$ and we run a grid-search with step-sizes $c \eta$, where $c\in [1/4, 1/2, 1, 2, 4, 8]$. The normalizing parameter $\omega$ is chosen to be $\omega=10^5$ for the landing with $\relgrad_B^\mathrm{R}(X)$, $\omega=0.1$ for the landing with $\relgrad_B(X)$, and $\omega=200$ for PLAM.

\begin{figure}[h]
    \centering
    \includegraphics[width=0.45\textwidth]{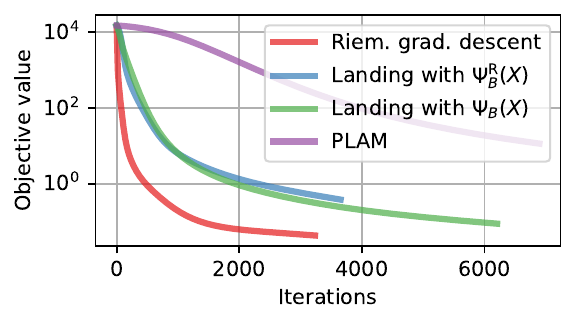}
    \includegraphics[width=0.462\textwidth]{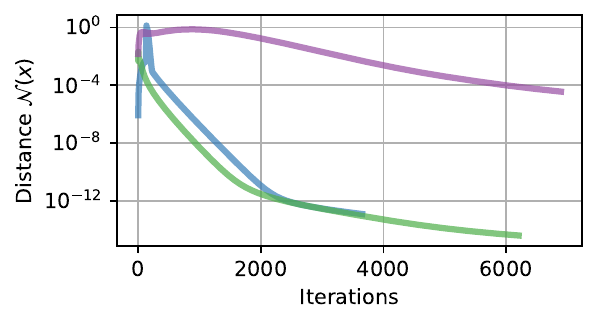}
    \caption{Deterministic computation of the generalized eigenvalue problem with $n = 1000, p = 500$, the condition number of the two matrices $\kappa_B = \kappa_A =100$. Each algorithm is given a time limit of $120$ seconds.}
    \label{fig:1_gevp_iter}
\end{figure}

\begin{figure}[h]
    \centering
    \includegraphics[width=0.45\textwidth]{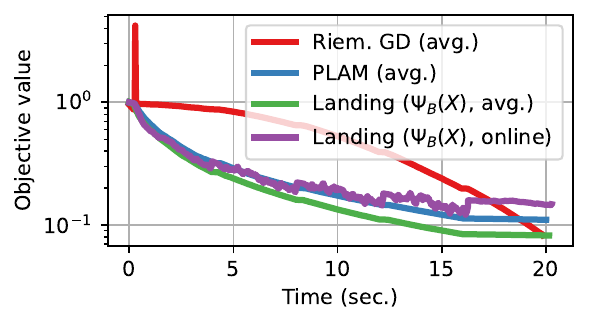}
    \includegraphics[width=0.45\textwidth]{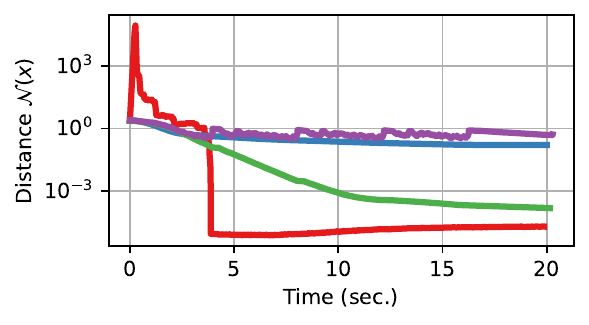}
    \caption{Stochastic canonical correlation analysis on the split MNIST dataset for $p=10$ canonical components.\label{fig:2_cca_mnist_p10}}
\end{figure}

\begin{figure}[h]
    \centering
    \begin{subfigure}[b]{0.45\columnwidth}
         \includegraphics[width=1\textwidth]{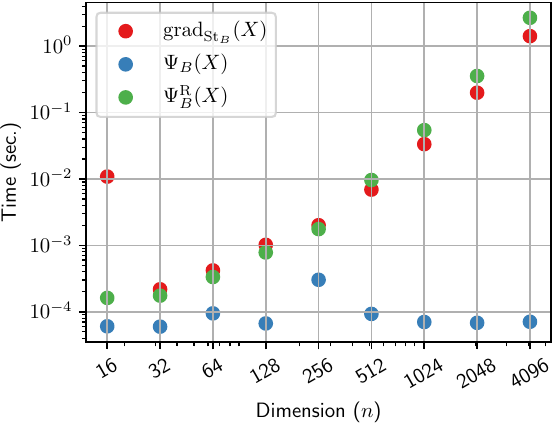}
        \caption{Descent directions.}
        \label{fig:2_cca_mnist_p5}
    \end{subfigure}
    \begin{subfigure}[b]{0.45\columnwidth}
        \includegraphics[width=1\textwidth]{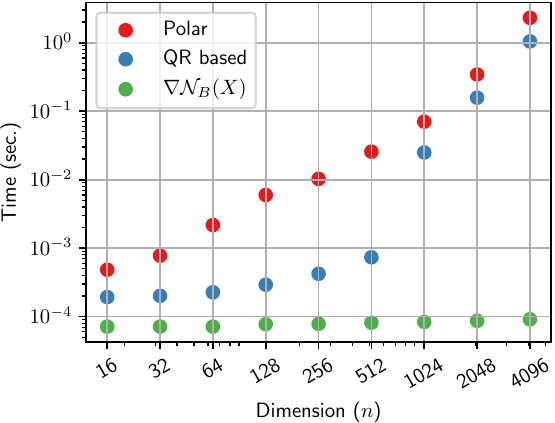}
       \caption{Retractions.\label{fig:3_cost_retractions_cuda}}
    \end{subfigure}
    \caption{Comparison of per-iteration computational time for different problem sizes of the descent directions of algorithms in Fig.~\ref{fig:1_gevp} and the cost of retractions compared to $\nabla \cN(X)$, both in the deterministic setting when $n=p=r$, for which the matrix multiplication in $\relgrad_B(X)$ and $\nabla_\cN(X)$ are at the disadvantage. Computation time of randomly generated $B,X\in\mathbb{R}^{n\times n}$ averaged over $100$ runs with \texttt{CUDA} implementation using \texttt{cupy}. \label{fig:3_cost}}
\end{figure}

\begin{figure}[h]
    \centering
    \includegraphics[width=0.45\textwidth]{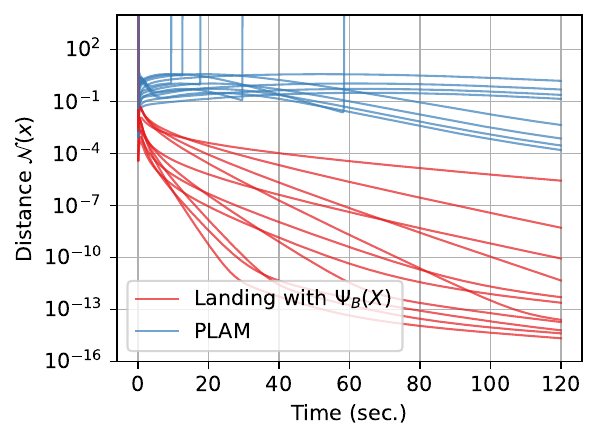}
    \includegraphics[width=0.45\textwidth]{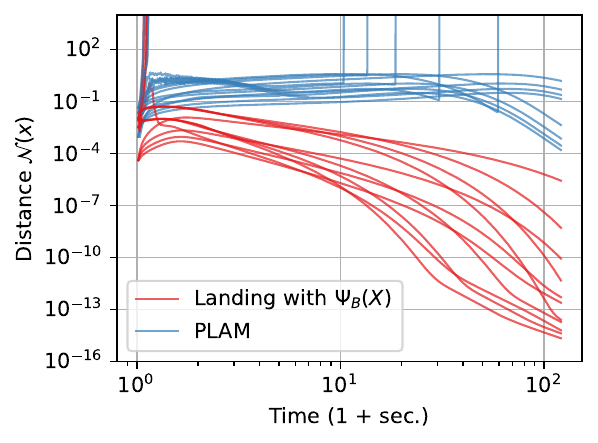}
    \caption{Comparison of the sensitivity to the choice of the step-size $\eta$ and $\omega$ of the landing with $\relgrad_B(X)$ and the PLAM method \citep{Gao2022orthogonalization} in the generalized eigenvalue problem experiment presented in Fig.~\ref{fig:1_gevp} with $n = 1000, p = 500$, and the condition number of the two matrices $\kappa_B = \kappa_A =100$. On the right we show log-log scale to better see the effect in earlier iterations. Both parameters are picked as in the experiment for Fig.~\ref{fig:1_gevp} and multiplied by a scalar from the set $\{0.25, 0.75, 1.25, 1.75\}$ for all possible pair combinations.} 
    \label{fig:4_gevp_sensitivity}
\end{figure}

\begin{figure}[h]
    \centering
    \includegraphics[width=0.45\textwidth]{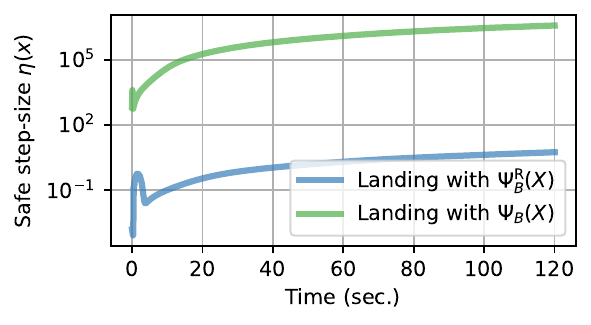}
    \caption{Numerical evaluation of the step-size safeguard $\eta(X)$ in Lemmma~\ref{lemma:safe_step} per time, which ensures that the iterates stay in $\stiefelGeps{B}$, for the two landing methods tested in Fig.~\ref{fig:1_gevp} with the $L_\cN$ bounded for the GEVP as in Lemma~\ref{lemma:gevp_lipschitz}.} 
    \label{fig:1_gevp_safestep}
\end{figure}

\begin{figure}[h]
    \centering
    \begin{subfigure}[b]{0.32\textwidth}
         \includegraphics[width=1\textwidth]{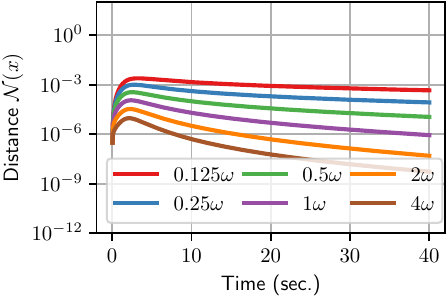}
        \caption{$\eta/8$}
        \label{fig:5_landing_sensitivity_eta0125}
    \end{subfigure}
    \begin{subfigure}[b]{0.32\textwidth}
         \includegraphics[width=1\textwidth]{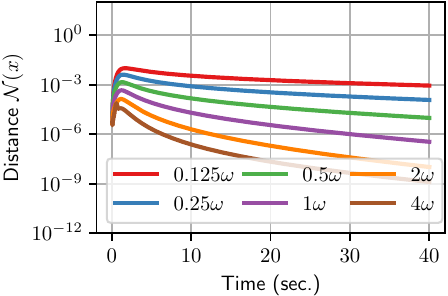}
        \caption{$\eta/4$}
        \label{fig:5_landing_sensitivity_eta025}
    \end{subfigure}
    \begin{subfigure}[b]{0.32\textwidth}
         \includegraphics[width=1\textwidth]{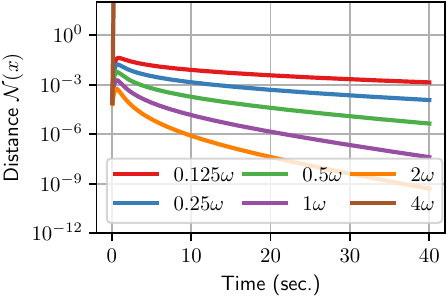}
        \caption{$\eta/2$}
        \label{fig:5_landing_sensitivity_eta05}
    \end{subfigure}
    \begin{subfigure}[b]{0.32\textwidth}
         \includegraphics[width=1\textwidth]{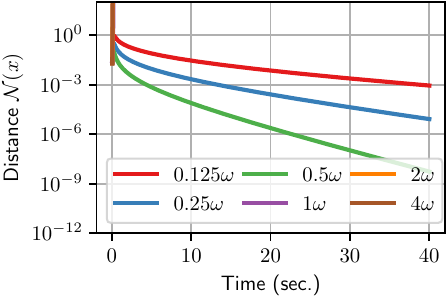}
        \caption{$2 \eta$}
        \label{fig:5_landing_sensitivity_eta05}
    \end{subfigure}
    \begin{subfigure}[b]{0.32\textwidth}
         \includegraphics[width=1\textwidth]{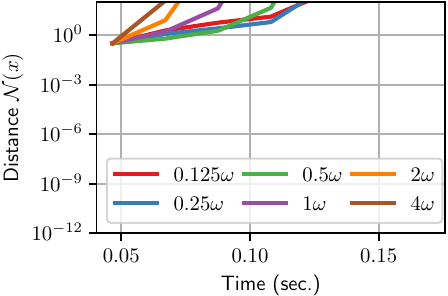}
        \caption{$4 \eta$.}
        \label{fig:5_landing_sensitivity_eta05}
    \end{subfigure}
    \caption{Robustness of the convergence towards the $\stiefelG{B}$ for the landing with $\relgrad_B(X)$ in the experiment for Fig.~\ref{fig:1_gevp} based on the multiplicative perturbations of $\eta$ and $\omega$ parameters with the values from $\{1/8, 1/4, 1/2, 2, 4\}$. } 
    \label{fig:5_landing_sensitivity}
\end{figure}

\section{Proofs for Section~\ref{sec:landing}}

\subsection{Proof of Proposition~\ref{prop:riemannian_gradient}\label{proof:riemannian_gradient}}
\begin{proof}
    The Riemannian gradient can be computed as
   \begin{equation}
        \grad f(x) = \nabla f(x) - \rD h(x)^* \left(\rD h(x)^*\right)^\dagger \nabla f(x), \label{eq:riemannian_gradient}
    \end{equation}
    where $\rD h(x)^* \left(\rD h(x)^*\right)^\dagger$ is the orthogonal projection on the normal space of $\cM_{h(x)}$. 
    It follows from~\eqref{eq:riemannian_gradient} and $\rD h(x) \rD h(x)^* \left(\rD h(x)^*\right)^\dagger=\rD h(x)$ that $\rD h(x)[\grad f(x)]=0$, which implies the first condition in Definition~\ref{def:relative_descent_direction} holds, i.e., $\inner{\grad f(x)}{v} = 0$ for all  $v \in \mathrm{range}(\rD h(x)^*)$. Since $\rD h(x)^* \left(\rD h(x)^*\right)^\dagger \nabla f(x)\in\mathrm{range}(\rD h(x)^*)$, we have
	\begin{align*}
			\| \grad f(x) \|^2 &= \inner{\grad f(x)}{\grad f(x)} \\
			&=  \inner{\grad f(x)}{\nabla f(x) - \rD h(x)^* \left(\rD h(x)^*\right)^\dagger \nabla f(x)}\\
			&=  \inner{\grad f(x)}{\nabla f(x)},
	\end{align*}
    which verifies the second condition with $\rho=1$. It also satisfies the third condition since the critical points are the points of $\mathcal{M}$ where $\mathrm{grad} f$ is zero.
\end{proof}

\subsection{Proof of Lemma~\ref{lemma:safe_step}\label{proof:safe_step}}
\begin{proof}

It is assumed that $\Lambda(x) \neq 0$, otherwise the conclusion of Lemma~\ref{lemma:safe_step} holds regardless of $\eta(x)$. Let $\tilde\eta = \inf \{\eta>0 \,:\, \cN(x - \eta \Lambda(x)) > \frac{\varepsilon^2}2 \}$.  
If $\tilde\eta = \infty$, then the conclusion of Lemma~\ref{lemma:safe_step} trivially holds; hence we now consider that $\tilde\eta < \infty$, i.e., $\tilde\eta$ is the first $\eta$ beyond which $x - \eta \Lambda(x)$ is no longer in the safe region $\cM ^\varepsilon$. Let $\tilde x = x - \tilde\eta \Lambda(x)$, and observe that the line segment from $x$ to $\tilde x$ is in $\cM ^\varepsilon$. Since $\cN$ is $L_\cN$-smooth in $\cM ^\varepsilon$ (Assumption~\ref{ass:h_constraint}), it follows from a standard bound (see, e.g.~\citet{BergerAbsilJungersNesterov2020}) that
\begin{align*}
    \frac{\varepsilon^2}2 = \cN(\tilde x) &\leq  \cN(x) + \inner{\nabla \cN(x)}{-\tilde\eta \Lambda(x)} + \frac{\tilde\eta^2 L_\cN}{2} \|\Lambda(x) \|^2 \\
    & = \cN(x) -\tilde\eta \omega \| \nabla \cN(x)\|^2 + \frac{\tilde\eta^2 L_\cN}{2} \| \Lambda(x)\|^2.
\end{align*}
The function $\bar\cN(\eta) := \cN(x) -\tilde\eta \omega \| \nabla \cN(x)\|^2 + \frac{\tilde\eta^2 L_\cN}{2} \| \Lambda(x)\|^2$ appearing on the right-hand side is a strictly convex quadratic function with $\bar\cN(0) < \frac{\varepsilon^2}2$. Since $\tilde\eta \geq 0$, it follows that $\tilde\eta \geq \eta(x)$, where $\eta(x)$ is the positive solution of $\bar\cN(\eta) = \frac{\varepsilon^2}2 $, whose formula is the one given in the statement of Lemma~\ref{lemma:safe_step}. Hence $x - \eta(x) \Lambda(x)$ is in the line segment from $x$ to $\tilde x$, which is included in $\cM ^\varepsilon$. 
\end{proof}

\subsection{Proof of Lemma~\ref{lemma:safe_step_lower}\label{proof:safe_step_lower}}
\begin{proof}
    In view of Assumption~\ref{ass:h_constraint}, $\| \nabla \cN(x)\|\geq \constHlow \| h(x)\|$ holds in $\mathcal{M}^\varepsilon$. We proceed to lower bound the numerator of the step-size safeguard $\eta(x)$ in Lemma~\ref{lemma:safe_step} as follows
    \begin{align*}
        \omega \| \nabla \cN(x)\|^2 + &\sqrt{\omega^2 \| \nabla \cN(x)\|^4 + L_\cN \|\Lambda(x)\|^2 (\varepsilon^2 -\| h(x)\|^2)} \nonumber \\
        &\geq  \omega \constHlow^2 \| h(x)\|^2 + \sqrt{\omega^2 \constHlow^4 \| h(x)\|^4  + L_\cN \| \relgrad(x)\|^2 \left(\varepsilon^2 - \| h(x) \|^2 \right)} \\
        &\geq \omega \constHlow^2 \| h(x)\|^2 \left(1+\frac{1}{\sqrt2}\right)+ \frac{1}{\sqrt2}\| \relgrad(x)\| \sqrt{L_\cN\left(\varepsilon^2 - \| h(x)\|^2\right)}\\
        &\geq \sqrt{\frac{L_\cN}{2}}\|\relgrad(x)\| (\varepsilon - \| h(x)\| ) + \left(1+\frac{1}{\sqrt2}\right)\omega \constHlow^2 \| h(x)\|^2
    \end{align*}
    where the first inequality comes from using bounds from Assumption~\ref{ass:h_constraint}, the second inequality comes from $\sqrt{a+b} \geq (\sqrt{a} + \sqrt{b})/\sqrt2$ for $a,b\geq 0$, and the final inequality from the fact that $\sqrt{a - b} \geq \sqrt{a} - \sqrt{b}$ for $a,b\geq 0$ and $a \geq b$. As a result we have that $\eta(x)$ in Lemma~\ref{lemma:safe_step} is lower-bounded by
    \begin{equation}  
        \eta(x) \geq \frac{\sqrt{\frac{L_\cN}{2}}\|\relgrad(x)\| (\varepsilon - \| h(x)\| ) + \left(1+\frac{1}{\sqrt2}\right)\omega \constHlow^2 \| h(x)\|^2}{L_\cN \left(\|\relgrad(x)\|^2 + \omega^2 \constH^2 \| h(x)\|^2 \right)}, \label{eq:safe_step_lower_ineq1}
    \end{equation}
    using the fact that $\| \Lambda(x)\|^2 = \| \relgrad(x)\|^2 + \omega^2 \| \nabla \cN(x)\|^2$ and $\| \nabla \cN(x)\|^2\leq \constH^2 \| h(x)\|^2$. 
    
    The right-hand side of \eqref{eq:safe_step_lower_ineq1} takes the form
    \begin{equation*}
        \underline\eta:= \frac{a P (\varepsilon - H) + b H^2}{c P^2 + d H^2},
    \end{equation*}
    where $a = \sqrt{\frac{L_\cN}{2}}$, $b = \left(1+\frac{1}{\sqrt2}\right)\omega \constHlow^2$, $c = L_\cN$, and $d = L_\cN \omega^2 \constH^2$ are constants and $P = \| \Psi(x)\|$, $H = \| h(x)\|$ are variables in bounded intervals: $0 \leq P \leq \constRgrad$ and $0 \leq H \leq \varepsilon$.

    Pick $\alpha \in (0,1)$. There are two cases, either $H \geq \alpha \varepsilon$ or $H < \alpha \varepsilon$.

    When $H \geq \alpha \varepsilon$, we have 
    \begin{equation}
        \underline\eta \geq \frac{b \alpha^2 \varepsilon^2}{c \constRgrad^2 + d \varepsilon^2}. 
    \end{equation}

    For the second case, when $H < \alpha \varepsilon$:
    \begin{equation}
        \underline\eta \geq \frac{a P (1-\alpha) \varepsilon + b H^2}{c P^2 + d H^2}.  \label{eq:safe_step_lower_ineq1b}
    \end{equation}
    
    We can again distinguish two cases to further lower bound \eqref{eq:safe_step_lower_ineq1b}. When $P\geq 1$, we have 
    \[
        \eqref{eq:safe_step_lower_ineq1b} \geq \frac{a (1-\alpha)\varepsilon}{c \constRgrad^2 + d \varepsilon^2}.
    \]
    When $P \leq 1$, we get
    \begin{align*}
        \eqref{eq:safe_step_lower_ineq1b} &\geq \frac{a P^2 (1-\alpha) \varepsilon + b H^2}{c P^2 + d H^2} \\
    &= \frac{\frac{a (1-\alpha) \varepsilon}{c} c P^2  + \frac{b}{d} d H^2}{c P^2 + d H^2} \\
    &\geq \min\{ \frac{a (1-\alpha) \varepsilon}{c}, \frac{b}{d} \} \frac{c P^2 (1-\alpha) \varepsilon + d H^2}{c P^2 + d H^2} \\
    &= \min\{ \frac{a (1-\alpha) \varepsilon}{c}, \frac{b}{d} \}.
    \end{align*}
    
    Putting all the cases together, we are left with the minimum of four terms
    \begin{align*}
        \underline\eta &\geq \min\left\{ \frac{b \alpha^2 \varepsilon^2}{c \constRgrad^2 + d \varepsilon^2}, \frac{a (1-\alpha)\varepsilon}{c \constRgrad^2 + d \varepsilon^2}, \frac{a (1-\alpha) \varepsilon}{c}, \frac{b}{d} \right\} \\
        &\geq\min\left\{ \frac{\omega \constHlow^2 \alpha^2 \varepsilon^2}{ L_\cN \left( \constRgrad^2 + \omega^2 \constH^2 \varepsilon^2 \right)}, \frac{(1-\alpha)\varepsilon}{\sqrt{2L_\cN}\left(  \constRgrad^2 + \omega^2 \constH^2 \varepsilon^2 \right)}, \frac{(1-\alpha) \varepsilon}{\sqrt{2L_\cN}}, \frac{1}{\omega L_\cN}\left( \frac{\constHlow}{\constH} \right)^2 \right\},
    \end{align*}
    where in the second line we also further lower bounded the first and the last terms in the minimum by using that $1 \leq \left(1+\frac{1}{\sqrt2}\right)$.    
\end{proof}

\subsection{Proof of Lemma~\ref{lemma:fletcher_inner}\label{proof:fletcher_inner}}
\begin{proof}
    The inner product has two parts
    \begin{align}
        \inner{\nabla \cL(x)}{ \Lambda(x)} &= \rD \cL(x)[\Lambda(x)] \nonumber \\
        = \rD \cL(x)&[\relgrad(x)] + \normCoef \rD \cL(x)[\nabla \cN(x)]. \label{eq:lemma:fletcher_inner1}
    \end{align}
    We expand the first term of the right hand side of \eqref{eq:lemma:fletcher_inner1} as
    \begin{align}
        \rD \cL(x) [\Psi(x)] &= \inner{\nabla f(x)}{\Psi(x)}  - \inner{(\rD h(x)^*)^\dagger \nabla f(x)}{\rD h(x) \Psi(x)} \nonumber \\
        & \qquad - \inner{\rD \lagrM(x) [\Psi(x)]}{h(x)} + 2\beta \inner{\nabla \cN(x)}{\Psi(x)} \nonumber \\
        & = \inner{\nabla f(x)}{\Psi(x)} - \inner{\rD \lagrM(x) [\Psi(x)]}{h(x)} \label{eq:lemma:fletcher_inner2c}
    \end{align}
    where we use that $\nabla \| h(x)\|^2 = 2\nabla \cN(x)$ and that the second and the third term are zero due to the orthogonality of $\relgrad(x)$ with the range of $\rD h(x)^*$. 
    We expand the second term of the right hand side of in \eqref{eq:lemma:fletcher_inner1} as
    \begin{align}
        \rD \cL(x)[\nabla \cN(x)] &= \inner{\nabla f(x)}{\nabla \cN(x)} - \inner{(\rD h(x)^*)^\dagger \nabla f(x)}{\rD h(x) \nabla \cN(x)} \nonumber \\
        & \qquad - \inner{\rD \lagrM(x) [\nabla \cN(x)]}{h(x)} + 2\beta \| \nabla \cN(x) \|^2 \nonumber \\
        &=\inner{(I_n - \rD h(x)^*(\rD h(x)^*)^\dagger ) \nabla f(x)}{\nabla \cN(x)} \nonumber \\
        & \qquad - \inner{\rD \lagrM(x) [\nabla \cN(x)]}{h(x)} + 2\beta \| \nabla \cN(x)\|^2 \nonumber \\
        &= - \inner{\rD \lagrM(x) [\nabla \cN(x)]}{h(x)} + 2\beta \| \nabla \cN(x)\|^2, \label{eq:lemma:fletcher_inner3c}
    \end{align}
    where in the second equality we move the adjoint $\rD h(x)^*$ in the second inner product to the left side and join it with the first inner product. The third equality comes from the fact that the projection of $\nabla f(x)$ on the null space of $\mathrm{D}h(x)$ and $\nabla\cN(x) = \rD h(x)^* h(x)$ are orthogonal.

    Joining the two components \eqref{eq:lemma:fletcher_inner2c} and \eqref{eq:lemma:fletcher_inner3c} together we get
    \begin{align*}
        \langle \nabla \cL(x),\,\Lambda(x)\rangle &= \inner{\nabla f(x)}{\Psi(x)} - \inner{\rD \lagrM(x) [\Lambda (x)]}{h(x)} + 2\beta \omega\| \nabla \cN(x) \|^2
        \\
        &\geq \rho \| \relgrad(x)\|^2 - \constLagrM \left(\| \relgrad(x) \| + \omega \| \nabla \cN(x) \|\right)\| h(x) \| + 2\beta \omega \|\nabla \cN(x)\|^2\\
        &\geq \rho \| \relgrad(x)\|^2 + \omega( 2\beta \constH - \constLagrM ) \constH \| h(x)\|^2 - \constLagrM \| \relgrad(x) \| \| h(x) \|\\
        &\geq \rho \| \relgrad(x)\|^2 + \omega( 2\beta \constH - \constLagrM ) \constH \| h(x)\|^2 - \frac{\constLagrM}{2} \left( \alpha \| \relgrad(x) \|^2 + \alpha^{-1} \| h(x) \|^2 \right)\\
        &\geq \left(\rho - \frac{\constLagrM}{2} \alpha\right) \| \relgrad(x)\|^2 + \left( 2 \omega\beta \constH^2 -\omega \constH \constLagrM - \alpha^{-1} \frac{\constLagrM}{2} \right) \| h(x)\|^2\\
        &\geq \frac\rho2 \left( \| \relgrad(x)\|^2 + \| h(x)\|^2 \right)
    \end{align*}
    where the first inequality comes from $\inner{\nabla f(x)}{\relgrad(x)}\geq \rho \|\relgrad(x)\|^2$ in Definition~\ref{def:relative_descent_direction} combined with the bound $\sup_{x\in\cM^\varepsilon}\|\rD \lambda(x) \|\leq \constLagrM$ and the triangle inequality, the second inequality comes from bounding $\|\nabla \cN(x)\|\leq \constH\| h(x) \|$ using Assumption~\ref{ass:h_constraint} and rearranging terms, the third inequality comes from using the AG-inequality $\sqrt{ab}\leq (a+b)/2$ with $a = \alpha \| h(x) \|^2$ and $b = \alpha^{-1} \| \relgrad (x)\|^2$ for an arbitrary $\alpha>0$, in the fourth inequality we only rearrange terms, and finally, in the fifth inequality we choose $\alpha = \rho/\constLagrM$ and use that $\beta \geq (\frac{\rho}{4\constH^2} + \frac{\omega\constLagrM}{2\constH} + \frac{\constLagrM^2}{4 \rho\constH^2})/\omega$.
\end{proof}

\subsection{Proof of Theorem~\ref{thm:landing_convergence}\label{proof:landing_convergence}}
\begin{proof}
    Due to $x_0\in\cM^\varepsilon$ and the step-size $\eta$ being smaller than the bound on the step-size safeguard in Lemma~\ref{lemma:safe_step_lower}, we have that the segment $[x^k, x^{k+1}]$ is included in $\mathcal{M}^\varepsilon$ for all $k$. By $L_{\mathcal{L}}$-smoothness of Fletcher's augmented Lagrangian in $\mathcal{M}^\varepsilon$, we can expand
    \begin{align}
        \cL(x^{k+1}) &\leq \cL(x^k) - \eta \inner{\Lambda(x^k)}{\nabla \cL(x^k)} + \frac{L_\cL\eta^2}{2}\| \Lambda(x^k) \|^2 \\
        &\leq \cL(x^k) - \frac{\eta\rho}{2} \left( \| \Psi(x^k)\|^2 + \| h(x^k) \|^2 \right) + \frac{L_\cL\eta^2}{2}\| \Lambda(x^k) \|^2 \\
        &\leq \cL(x^k) - \frac{\eta}{2}\Big( \left(\rho - L_\cL\eta\right)\| \Psi(x^k) \|^2 +  \left(\rho - \eta L_\cL \Rev{\omega^2} \constH^2 \right) \| h(x^k)\|^2 \Big), \label{eq:landning_convergence1}
    \end{align}
    where in the second inequality we used the results of Lemma~\ref{lemma:fletcher_inner}, and in the third inequality we use the bound on $\| \nabla \cN(x)\| \leq \constH \| h(x)\|$ by Assumption~\ref{ass:h_constraint}. By the step-size $\eta < \min{\left\{ \frac{\rho}{2L_\cL}, \frac{\rho}{2L_\cL \Rev{\omega^2}\constH^2}\right\}}$ we have
    \begin{equation}
        \frac{\eta\rho}{4} \| \relgrad(x^k) \|^2 + \frac{\eta\rho}{4}\| h(x^k)\|^2 \leq \cL(x^k) - \cL(x^{k+1}).
    \end{equation}
    Telescopically summing the first $K+1$ terms gives
    \begin{align*}
        \frac{\eta\rho}{4} \sum_{k=0}^K \| \relgrad(x^k) \|^2 + \frac{\eta\rho}{4} \sum_{k=0}^K \| h(x^k)\|^2 \leq \cL(x^0) - \cL(x^{K+1}) \leq \cL(x^0) - \cL^*, 
    \end{align*}
    which implies that the inequalities hold individually also
    \begin{equation*}
        \frac{\eta\rho}{4} \sum_{k=0}^K \| \relgrad(x^k) \|^2 \leq  \cL(x^0) - \cL^* \qquad\text{and}\qquad \frac{\eta\rho}{4} \sum_{k=0}^K \| h(x^k)\|^2 \leq \cL(x^0) - \cL^*.
    \end{equation*}
\end{proof}

\subsection{Proof of Theorem~\ref{thm:landing_stochastic_convergence}\label{proof:landing_stochastic_convergence}}
\begin{proof}
    Let $x^{k+1} = x^{k} - \eta_k \tilde \Lambda (x^k)$, where we denote by $\tilde \Lambda(x^k) = \Lambda(x^k) + \tilde E(x^k, \Xi^k)$ the unbiased estimator of the landing update, and we assume that the line segment between the iterates remain within $\cM^{\varepsilon}$. By $L_{\mathcal{L}}$-smoothness of Fletcher's augmented Lagrangian inside $\cM^\varepsilon$, we have
    \begin{align*}
        \bE_{\Xi^k}&\left[\cL(x^{k+1})\right] \leq \bE_{\Xi^k}\Big[\cL(x^k) - \eta_k \inner{\tilde \Lambda(x^k)}{\nabla \cL(x^k)} + \frac{L_\cL\eta_k^2}{2}\| \tilde \Lambda(x^k) \|^2 \Big]\\
        & \leq \cL(x^k) - \eta_k \inner{\Lambda(x^k)}{\nabla \cL(x^k)} + \frac{L_\cL\eta_k^2}{2}\left( \| \Lambda(x^k) \|^2 + \gamma^2  \right) \\
        &\leq \cL(x^k) - \frac{\eta_k\rho}{2} \left( \| \Psi(x^k)\|^2 + \| h(x^k) \|^2 \right) + \frac{L_\cL\eta_k^2}{2} \left( \| \Lambda(x^k) \|^2 + \gamma^2 \right)\\
        &\leq \cL(x^k) + \frac{L_\cL\eta_k^2}{2} \gamma^2 -\frac{\eta_k}{2}\left( \left(\rho - L_\cL\eta_k\right) \| \Psi(x^k) \|^2 + \left(\rho - \eta_k L_\cL \Rev{\omega^2}\constH^2 \right) \| h(x^k)\|^2 \right),
    \end{align*}
    where the first inequality comes from taking an expectation of a bound akin the first bound of~\autoref{proof:landing_convergence}, in the second inequality we take the expectation inside the inner product using the fact that $\tilde E(x^k, \Xi^k)$ is zero-centered and has bounded variance, the third inequality comes as a consequence of Lemma~\ref{lemma:fletcher_inner}. The last inequality comes as a consequence of $\Lambda(x^k)$ having two orthogonal components and rearranging terms in the same way as in \eqref{eq:landning_convergence1}.
    Note that by $\mathbb{E}_{\Xi^k}$ we denote expectation only with respect to the last random realization $\Xi^k$.

    By the step-size being smaller than $\eta_k \leq \eta_0 < \frac{\rho}{2L_\cL} \min\left\{1,(\Rev{\omega}  \constH)^{-2}\right\}$ we have that
    \begin{equation}
        \frac{\eta_k\rho}{4} \| \relgrad(x^k) \|^2 + \frac{\eta_k\rho}{4}\| h(x^k)\|^2 \leq \cL(x^k) - \bE_{\Xi^k} \left[\cL(x^{k+1})\right] + \frac{L_\cL\eta_k^2}{2} \gamma^2\label{eq:stochastic_decrease1}.
    \end{equation}
    Taking the expectation of \eqref{eq:stochastic_decrease1} with respect to the whole past random realizations $\Xi^0, \dots, \Xi^k$, denoted for short simply as $\mathbb{E}$, yields
    \begin{equation}
        \bE\left[\frac{\eta_k\rho}{4} \| \relgrad(x^k) \|^2 + \frac{\eta_k\rho}{4}\| h(x^k)\|^2 \right] \leq \bE[\cL(x^k)] - \bE\left[\bE_{\Xi^k} \left[\cL(x^{k+1})\right]\right] + \frac{L_\cL\eta_k^2}{2} \gamma^2.\label{eq:stochastic_decrease2}
    \end{equation}
    Since $x^{k+1} = x^k - \eta_k \tilde\Lambda(x^k)$, we have that $\bE\left[\bE_{\Xi^k}[\cdot]\right] =\bE[\cdot]$, and we can telescopically sum the first $K+1$ terms of \eqref{eq:stochastic_decrease1} for $k=0, 1, \ldots, K$:
    \begin{align}
        \frac{\rho}{4}\left( \sum_{k=0}^{K} \eta_k \bE\left[ \| \relgrad(x^k) \|^2\right] +  \sum_{k=0}^{K} \eta_k \bE \left[ \| h(x^k)\|^2 \right] \right)
        &\leq \cL(x^0) - \bE \left[\cL(x^{K+1})\right] + \frac{L_\cL\eta_0^2 \gamma^2}{2} \sum_{k=0}^{K} (1+k)^{-1} \label{eq:landing_stochastic_convergence_sum}\\
        &\leq \cL(x^0) - \cL^* + \frac{L_\cL\eta_0^2\gamma^2}{2}\left( 1+ \log(K+1)\right) \nonumber
    \end{align}
    which implies that the inequalities hold also individually
    \begin{align*}
        \inf_{k\leq K} \bE \left[\| \relgrad(x^k) \|^2\right] & \leq 4\frac{\cL(x^0) - \cL^*}{\rho \eta_0 \sqrt{K}} + 2 \frac{\eta_0 L_\cL \gamma^2}{\rho} \left(\frac{1+ \log(K+1)}{\sqrt{K}}\right), \\
        \inf_{k\leq K} \bE \left[\| h(x^k) \|^2\right] & \leq 4\frac{\cL(x^0) - \cL^*}{\rho  \eta_0 \sqrt{K}} + 2 \frac{\eta_0 L_\cL \gamma^2}{\rho} \left(\frac{1+ \log(K+1)}{\sqrt{K}}\right),
    \end{align*}
    where we used that $\inf_{k\leq K}  \bE\| \relgrad(x^k) \|^2\leq \sum_{k=0}^K \eta_k  \bE\|\relgrad(x^k)\|^2 \left( \sum_{k=0}^K \eta_k\right)^{-1}$ and the fact that $\sum_{k\leq K} \eta_k \geq \eta_0 \sqrt{K}$.
\end{proof}

\section{Proofs for Section~\ref{sec:generalized_stiefel}}

\subsection{Specific forms of $\rD h(x), \lambda(X)$ for $\stiefelG{B}$}\label{app:specific_forms}

We begin by showing the specific form of the formulations derived in the previous section for the case of the generalized Stiefel manifold. Let $h: \mathbb{R}^{n\times p} \to \mathrm{sym}(p): X \mapsto X^\top B X - I_p$, where letting $\mathrm{sym}(p)$ be the codomain is essential for Assumption~\ref{ass:h_constraint} to hold. Differentiating the generalized Stiefel constraint yields $\rD h(X)[V] = X^\top B V + V^\top B X$ and the adjoint $\mathrm{D}h(X)^*: \mathrm{sym}(p) \to \mathbb{R}^{n\times p}$ is derived as
\begin{align}
    \inner{\rD h(X)^* [V]}{W} = \inner{V}{\rD h(X)[W]} &=\inner{V}{W^T B X + X^T B W} = \inner{2BXV}{W},
\end{align}
as such we have that $\rD h(X)^*[V] = 2BXV$. Consequently
\begin{equation}
    \rD h(X) \rD h(X)^*[V] = 2VX^\top B^2 X + 2X^\top B^2 X V,
\end{equation}
and the Lagrange multiplier $\lambda(X)$ is defined in the case of the generalized Stiefel manifold as the solution to the following Lyapunov equation
\begin{equation}
    2\lambda(X) X^\top B^2X + 2X^\top B^2 X \lambda(X) = X^\top B \nabla f(X) + \nabla f(X)^\top B X. \label{eq:lyapunov_system}
\end{equation}
Importantly, due to $\lagrM(X)$ being the unique solution to the linear equation, which is ensured by $BX$ having a full rank since $X$ is in the $\varepsilon$-safe region, and by the the linear operator being smooth in $X$, since $\nabla f(X)$ is smooth, we have that $\lagrM(X)$ is invertible and smooth with respect to $X$. Thus, as a smooth function defined over a compact set $\stiefelGeps{B}$, its operator norm is bounded: $\sup_{X\in\stiefelGeps{B}}\| \rD \lagrM(X) \|_F \leq \constLagrM$ as required by Assumption~\ref{ass:lagrange_mult}.

\subsection{Proof of Proposition~\ref{prop:gen_stiefel_constants}\label{proof:gen_stiefel_constants}}
\begin{proof}
    For $\| X^\top B X - I_p\|_F \leq \varepsilon$, let  $X = U\Sigma V^\top$ be a truncated singular value decomposition of $X$, and $Q D Q^\top$ be an eigendecomposition of $B$. We then have
    \begin{align}
        \varepsilon^2 \geq \| X^\top B X - I_p \|_{\mathrm{F}}^2 &= \| \Sigma U^\top Q D (U^\top Q)^\top \Sigma - I_p\|_{\mathrm{F}}^2 \label{eq:gen_stiefel_constants1}
    \end{align}
    where $\beta_i, \sigma_i$ are the positive eigenvalues of $B$ and the singular values of $X$ respectively in  decreasing order. 

    Denote $P = Q^\top U \in\mathbb{R}^{n\times p}$ that forms an orthogonal frame $P^\top P = I_p$. The bound in \eqref{eq:gen_stiefel_constants1} implies 
    \begin{equation}
        \varepsilon^2 \geq \sum_{i=1}^p \left( \sigma_i^2 \left(P^\top D P\right)_{ii} - 1 \right)^2,
    \end{equation}
    where $\left(P^\top D P\right)_{ii}$ marks the $i^{th}$ diagonal entry of the matrix $P^\top D P$. Consequently, we have that
    \begin{equation}
        1-\varepsilon \leq \sigma_i^2 \left( P^\top D P\right)_{ii} \leq 1+\varepsilon \label{eq:gen_stiefel_constants2}
    \end{equation}
    for all $i=1,\ldots, p$. We can bound
    \begin{equation}
        \beta_n  = \inf_{\|x\|_2 = 1} x^\top D x \leq \left( P^\top D P\right)_{ii} \leq \sup_{\| x \|_2 = 1} x^\top D x =\beta_1, \label{eq:gen_stiefel_constants3}
    \end{equation}
    since $P^\top P = I_p$. The inequality in \eqref{eq:gen_stiefel_constants2} combined with \eqref{eq:gen_stiefel_constants3} implies that 
    \begin{equation}
        \sqrt{(1-\varepsilon)/\beta_1} \leq \sigma_i \leq \sqrt{(1+\varepsilon)/\beta_{n}}.\label{eq:gen_stiefel_eval_bounds}
    \end{equation} 
    By the lower and the upper bounds on singular values of a matrix product, the above bound gives that the singular values of $\rD h(X)^* = 2BX$ are in the interval $[2 \sqrt{(1-\varepsilon)\beta_n\kappa_B^{-1}}, 2\sqrt{(1+\varepsilon)\beta_1\kappa_B}]$ which in turn gives the constants $\constH, \constHlow$.
\end{proof}

\subsection{Proof of Proposition~\ref{prop:generalized_stiefel_relative_descent}\label{proof:generalized_stiefel_relative_descent}}
\begin{proof}
    First consider $\Psi_B(X)$. For ease of notation we denote $G = \nabla f(X)\in\bR^{n\times p}$. The first property Definition~\ref{def:relative_descent_direction} \emph{(i)} comes from
    \begin{equation}
        \inner{\sk(GX^\top B) BX}{BX S}= 0,
    \end{equation}
    which holds for a symmetric matrix $S$, since a skew-symmetric matrix is orthogonal in the Frobenius inner product to a symmetric matrix,

    The second property \emph{(ii)} is a consequence of the following
	\begin{equation}
		\inner{\relgrad_B(X)}{G} = \inner{\sk(G X^T B)BX}{G}
        =\| \sk(G X^T B) \|_{\mathrm{F}}^2 \geq \frac{1}{(1+\varepsilon) \beta_1\kappa_B }\| \relgrad_B(X) \|_{\mathrm{F}}^2,\label{eq:aligned_grad1}
	\end{equation}
    where we use the bounds on $\| BX \|_2 \leq \sqrt{(1+\varepsilon)\beta_1\kappa_B}$ derived in the proof of Proposition~\ref{prop:gen_stiefel_constants}.
	
	To show the third property \emph{(iii)}, we first consider a critical point $X\in\stiefelG{B}$, for which it must hold that $G$ is in the image of $\mathrm{D}h(X)^*$, i.e.,
	\begin{equation}
		G = BXS, \label{eq:critical_point1}
	\end{equation}
	for some $S\in\sym(p)$ and that $X^\top B X = I_p$ by feasibility. We have that at the critical point defined in \eqref{eq:critical_point1}, the relative ascent direction is
	\begin{align}
		\relgrad_B(X) &= \sk(G X^\top B)BX = \sk(BX S X^\top B)BX = 0,
	\end{align}
	where the second equality is the consequence of \eqref{eq:critical_point1} and the third equality comes from the fact that $BX S X^\top B$ is symmetric.
	
	To show the other side of the implication, that $\relgrad_B(X)=0$ combined with feasibility imply that $X$ is a critical point, we consider 
	\begin{align}
		0 = \relgrad_B(X) = \sk(G X^\top B)BX = GX^\top B^2 X - BXG^\top BX \label{eq:critical_point3}
	\end{align}
	which, since $X^\top B^2 X\in\bR^{p\times p}$ is invertible, is equivalent to
	\begin{equation}
		G =  BXG^\top BX \left(X^\top B^2 X\right)^{-1}. \label{eq:critical_point3b}
	\end{equation}

     It remains to show that the factor $G^\top BX \left(X^\top B^2 X\right)^{-1}$ in~\eqref{eq:critical_point3b} is symmetric in order to get~\eqref{eq:critical_point1}. To this end, multiply~\eqref{eq:critical_point3} on the left by $(X^\top B^2 X)^{-1} X^\top B$ and on the right by $(X^\top B^2 X)^{-1}$ and rearrange the terms. 

    For the other choice of relative gradient $\Psi_B^\mathrm{R}(X) = \sk(B^{-1}GX^\top)BX$, letting $M = B^{-1}GX^\top$, we find 
    \begin{align}
        \langle \Psi_B^\mathrm{R}(X), G\rangle &= \langle \sk(M), BMB\rangle \\
        &=\langle \sk(M), \sk(BMB)\rangle\\
        &=\langle \sk(M), B\sk(M)B\rangle\\
        &\geq \|\sk(M)\|_{\mathrm{F}}^2\beta_n^2
    \end{align}
    and similarly as in \eqref{eq:aligned_grad1}, it holds $\|\Psi_B^\mathrm{R}(X)\|^2\leq \|\sk(M)\|_{\mathrm{F}}^2(1+\varepsilon)\beta_1\kappa_B$ which in turn leads to $\langle \Psi_B^\mathrm{R}(X), G\rangle \geq \frac{\beta_n}{1+\varepsilon}\|\Psi^\mathrm{R}_B(X)\|^2$
\end{proof}

\subsection{Lipschitz constants for the GEVP}
\begin{lemma}[Lipschitz constants for the generalized eigenvalue problem]\label{lemma:gevp_lipschitz}
    Let  $f= -\frac12 \Tr (X^\top A X)$ and $\cN(X) = \frac12 \| X^\top B X - I_p\|_F^2$ as in the optimization problem corresponding to the generalized eigenvalue problem. We have that, for $X\in\stiefelGeps{B}$, a Lipschitz constant for $\nabla \cN$ is $L_\cN= 2 \beta_1\left(\varepsilon + 2(1+\varepsilon)\kappa_B\right)$ and the Lipschitz constant for $\nabla f$ is $L_f = \alpha_1$ where $\alpha_1$ is the largest eigenvalue of $A$.
\end{lemma}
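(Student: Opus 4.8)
The plan is to establish the two Lipschitz constants independently by direct differentiation, using only elementary norm inequalities together with the singular-value bounds on feasible-region points already obtained in the proof of Proposition~\ref{prop:gen_stiefel_constants}.

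First I would handle $\nabla f$. Since $f(X) = -\tfrac12\Tr(X^\top A X)$ with $A$ symmetric, the Euclidean gradient is $\nabla f(X) = -AX$, which is \emph{linear} in $X$. Hence $\|\nabla f(X)-\nabla f(Y)\|_F = \|A(X-Y)\|_F \le \|A\|_2\,\|X-Y\|_F = \alpha_1\|X-Y\|_F$ for all $X,Y$, so $L_f = \alpha_1$ with no need to restrict to $\stiefelGeps{B}$.

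Next, for $\nabla\cN$, recall $\nabla\cN(X) = 2BX(X^\top B X - I_p)$. I would bound its Lipschitz modulus on $\stiefelGeps{B}$ by bounding the operator norm of its differential and then invoking the mean-value inequality along line segments contained in $\stiefelGeps{B}$ --- which is precisely the regime in which $L_\cN$ is used (Lemma~\ref{lemma:safe_step} and the descent bounds). Differentiating gives
\begin{equation*}
    \rD(\nabla\cN)(X)[V] = 2BV(X^\top B X - I_p) + 2BX\bigl(V^\top B X + X^\top B V\bigr).
\end{equation*}
Estimating in Frobenius norm, using $\|B\|_2 = \beta_1$, submultiplicativity $\|V^\top B X\|_F \le \|B\|_2\|X\|_2\|V\|_F$, the safe-region bound $\|X^\top B X - I_p\|_F \le \varepsilon$, and the singular-value bound $\|X\|_2^2 \le (1+\varepsilon)/\beta_n$ established in \eqref{eq:gen_stiefel_eval_bounds} in the proof of Proposition~\ref{prop:gen_stiefel_constants}, yields
\begin{equation*}
    \|\rD(\nabla\cN)(X)[V]\|_F \le 2\beta_1\varepsilon\|V\|_F + 4\beta_1^2\|X\|_2^2\|V\|_F \le 2\beta_1\bigl(\varepsilon + 2(1+\varepsilon)\kappa_B\bigr)\|V\|_F,
\end{equation*}
using $4\beta_1^2/\beta_n = 4\beta_1\kappa_B$. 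Hence $\sup_{X\in\stiefelGeps{B}}\|\rD(\nabla\cN)(X)\| \le L_\cN = 2\beta_1(\varepsilon + 2(1+\varepsilon)\kappa_B)$, and the mean-value inequality along segments in $\stiefelGeps{B}$ completes the argument.

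The computation is essentially routine; the only point needing care is that $\stiefelGeps{B}$ is not convex, so ``Lipschitz on $\stiefelGeps{B}$'' should be read as the induced quadratic upper bound holding along segments that lie in $\stiefelGeps{B}$ --- and the Hessian-norm bound above is exactly what delivers that, which is all the convergence analysis requires. The remaining mild subtlety is keeping track of which factors should carry the operator norm ($B$ and $X$) versus the Frobenius norm ($V$ and $X^\top B X - I_p$) in the product estimates, and reusing $\sigma_i(X)^2 \le (1+\varepsilon)/\beta_n$ from Proposition~\ref{prop:gen_stiefel_constants} rather than re-deriving it.
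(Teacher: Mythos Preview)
Your proof is correct and reaches the same constants, but the route differs slightly from the paper's. The paper bounds the difference $\nabla\cN(X)-\nabla\cN(Y)$ directly: it writes
\[
\nabla\cN(X)-\nabla\cN(Y) = 2B\Bigl((X-Y)(X^\top BX-I_p) + Y\bigl((X-Y)^\top BX + Y^\top B(X-Y)\bigr)\Bigr),
\]
applies the triangle inequality to the three pieces, and uses $\|X^\top BX-I_p\|\le\varepsilon$ and $\|X\|_2,\|Y\|_2\le\sqrt{(1+\varepsilon)/\beta_n}$ to obtain the same $L_\cN$. The payoff of the paper's direct argument is that the Lipschitz inequality holds for \emph{any} pair $X,Y\in\stiefelGeps{B}$, with no need for the segment between them to stay in the safe region; your differential bound plus the mean-value inequality delivers only the segment version. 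You handle this honestly by noting that the convergence analysis (Lemma~\ref{lemma:safe_step} and the descent steps) only ever invokes $L_\cN$ along segments already shown to lie in $\stiefelGeps{B}$, so the weaker conclusion suffices. Both approaches use the same norm estimates and the singular-value bound from Proposition~\ref{prop:gen_stiefel_constants}; the three terms in your differential correspond one-to-one to the three terms in the paper's decomposition, so the arithmetic is identical. The $L_f=\alpha_1$ part is handled identically in both.
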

\begin{proof}
    Take $X, Y\in\stiefelG{B}$, we have that $\nabla \cN(X) = 2BX (X^\top B X - I_p)$, thus
    \begin{align}
        \nabla \cN(X) - \nabla \cN(Y) &= 2B\left( X(X^\top B X - I_p) - Y (Y^\top B Y - I_p)\right) \\
        &= 2B \left( (X-Y)(X^\top B X - I_p) + Y\left((X^\top BX - Y^\top B Y) \right)\right) \\
        & = 2B \left( (X-Y)(X^\top B X - I_p) + Y\left( (X-Y)^\top B X + Y^\top B (X-Y) \right)\right).
    \end{align}
    Taking the Frobenius norm and by the triangle inequality we get
    \begin{align}
        \| \nabla \cN(X) - \nabla \cN(Y)\| &\leq 2\left(\| B(X-Y)(X^\top B X - I_p)\| + \| BY (X-Y)^\top BX \| + \| BY Y^\top B (X-Y)\|\right)\\
        &\leq 2 \left(\left\|X-Y\right\| \left\|B (X^\top B X - I_p)\right\|_2 + \left\| X-Y\right\| \left\| BY BX \right\|_2 + \left\| X-Y\right\| \left\| BY Y^\top B\right\|_2 \right) \\
        &\leq 2 \| X - Y \|  \left( \| B \|_2 \| X^\top B X - I_p \| + \| B \|_2^2 \| X \|_2 \| Y\|_2 + \| B\|_2^2 \| Y\|_2^2\right)\\
        &\leq 2 \beta_1\left(\varepsilon + 2(1+\varepsilon)\kappa_B\right) \| X-Y \|,
    \end{align}
    where for the second inequality we used that $\| A B\| \leq \|A\|_2 \|B\|$, the third inequality comes from submultiplicativity of the induced $\ell_2$-norm for matrices, and the fourth inequality comes from $X,Y\in\stiefelGeps{B}$ for which we have that $\| X\|_2 \leq \sqrt{(1+\varepsilon)/\beta_n}$, as in \eqref{eq:gen_stiefel_eval_bounds}, and the same for $Y$.

    When $f = \frac12 \Tr(X^\top A X)$, we have that $\| \nabla f(X) - \nabla f(Y)\|\leq \| A\|_2 \| X-Y\|$.
\end{proof}

\subsection{Proof of Proposition~\ref{prop:variance_landing}\label{proof:variance_landing}}
\begin{proof}
We start by deriving the bound on the variance of the normalizing component $\nabla \cN(X)$. Consider $U$ and $V$ to be two independent random matrices taking i.i.d.~values from the distribution of $B_\zeta$ with variance $\sigma_B^2$. We have that
\begin{equation}
	\mathrm{Var}\left[UX(X^\top VX - I_p)\right] = \mathbb{E}_{U, V}\left[\|UX(X^\top VX - I_p) - BX(X^\top BX - I_p) \|^2\right].
\end{equation}
Introducing the random marginal $BX(X^\top VX - I_p)$, we further decompose
\begin{align}
	\mathrm{Var}\left[UX(X^\top VX - I_p)\right] &= \mathbb{E}_{U, V}\left[\|UX(X^\top VX - I_p) - BX(X^\top VX - I_p) \|^2\right] \\
    &\qquad + \mathbb{E}_{V}\left[\|BX(X^\top VX - I_p) - BX(X^\top BX - I_p) \|^2\right].
\end{align}
The first term in the above is upper bounded as 
\begin{align}
    \mathbb{E}_{U, V}\left[\|UX(X^\top VX - I_p) - BX(X^\top VX - I_p) \|^2\right] &\leq\mathbb{E}_{U, V}\left[\|U - B\|^2\|X(X^\top VX - I_p)\|_2^2\right]\\
    &=\sigma_B^2\mathbb{E}_V[\|X(X^\top VX - I_p)\|_2^2]\\
    &\leq \sigma_B^2 \frac{1+\varepsilon}{\beta_n}\mathbb{E}_V[\|X^\top VX - I_p\|_2^2]\\
    &\leq \sigma_B^2 \frac{1+\varepsilon}{\beta_n}\left(\sigma_B^2\frac{1+\varepsilon}{\beta_n} + \varepsilon^2\right),
\end{align}
where we used $\|X\|^2\leq  \frac{1+\varepsilon}{\beta_n}$, and we control $\mathbb{E}_V[\|X^\top VX - I_p\|_2^2] \leq \mathbb{E}_V[\|X^\top VX - I_p\|^2] = \mathbb{E}_V[\|X^\top (V - B)X\|^2] + \|XBX^\top - I_p\|^2 \leq \sigma_B^2\frac{1+\varepsilon}{\beta_n} + \varepsilon^2$.
The second term is controlled by 
\begin{align}
    \mathbb{E}_{V}\left[\|BX(X^\top VX - I_p) - BX(X^\top BX - I_p) \|^2\right] &=\mathbb{E}_{V}\left[\|BXX^\top (V-B)X \|^2\right]\\
    &\leq \sigma_B^2\|B\|_2^2\|X\|_2^6\\
    &\leq \sigma_B^2\beta_1^2\frac{(1+\varepsilon)^3}{\beta_n^3},
\end{align}
where we used  $\|X\|_2^2\leq  \frac{1+\varepsilon}{\beta_n}$ and $\|B\|_2 = \beta_1$.
Taking things together we obtain
\begin{align}
	\mathrm{Var}\left[UX(X^\top VX - I_p)\right] &\leq \sigma_B^2\left(\frac{1+\varepsilon}{\beta_n}\left( \sigma_B^2\frac{1+\varepsilon}{\beta_n} + \varepsilon^2\right) + \beta_1^2\frac{(1+\varepsilon)^3}{\beta_n^3}\right).
\end{align}

Similarly, the variance of the first term in the landing  is controlled by introducing yet another random variable $G$ that takes values from $\nabla f_\xi(X)$. 
We use the U-statistics variance decomposition twice to get
\begin{align*}
    \mathrm{Var}[\sk\left( G X^\top U\right) V X] &= \mathbb{E}_{G, U, V}[\|\sk((G - \nabla f(X))X^\top U)VX\|^2] \\
	&\qquad + \mathbb{E}_{U, V}[\|\sk(\nabla f(X)X^\top (U - B))VX\|^2] \\
	&\qquad+ \mathbb{E}_{V}[\|\sk(\nabla f(X)X^\top B)(V - B)X\|^2].
\end{align*}
The first term is upper bounded by doing
\begin{align}
    \mathbb{E}_{G, U, V}[\|\sk((G - \nabla f(X))X^\top U)VX\|^2] &\leq \mathbb{E}_{G, U, V}[\|G - \nabla f(X)\|^2\|X^\top U\|_2^2\|VX\|_2^2]\\
    &\leq \sigma_G^2 \mathbb{E}_U[\|U\|^2_2]^2\|X\|_2^4\\
    &\leq  \sigma_G^2 p_B^2 \frac{(1+\varepsilon)^2}{\beta_n^2},
\end{align}
where we used $p_B = \mathbb{E}_U[\|U\|^2_2] = \mathbb{E}_{B_\zeta}[\|B_\zeta\|^2_2]$.
The second term gives
\begin{align}
    \mathbb{E}_{U, V}[\|\sk(\nabla f(X)X^\top (U - B))VX\|^2] &\leq \mathbb{E}_{U, V}[\|\nabla f(X)X^\top \|_2^2\|U - B\|^2\|VX\|_2^2]\\
    &\leq \sigma_B^2 \|\nabla f(X)X^\top\|_2^2\mathbb{E}_U[\|U\|^2]\|X\|_2^2\\
    &\leq \sigma_B^2 \Delta p_B\frac{1+\varepsilon}{\beta_n},
\end{align}
where $\Delta$ upper-bounds $\|\nabla f(X)X^\top \|_2^2$.
The third term gives
\begin{align}
    \mathbb{E}_{V}[\|\sk(\nabla f(X)X^\top B)(V - B)X\|^2]&\leq \mathbb{E}_{V}[\|\nabla f(X)X^\top \|_2^2\|B\|^2_2\|V - B\|^2\|X\|_2^2]\\
    &\leq \sigma_B^2 \|\nabla f(X)X^\top\|^2_2\| B\|_2^2\|X\|_2^2 \\
    &\leq \sigma_B^2\Delta \beta_1^2 \frac{1+\varepsilon}{\beta_n},
\end{align}
which leads to the bound
\begin{align*}
    \mathrm{Var}[\sk\left( G X^\top U \right) V X]&\leq \sigma_G^2 p_B^2 \frac{(1+\varepsilon)^2}{\beta_n^2} + \sigma_B^2\frac{1+\varepsilon}{\beta_n}\Delta\left( p_B + \beta_1^2 \right).
\end{align*}
Finally, we join these two bounds using the generic inequality $\mathrm{Var}[a + b]\leq 2(\mathrm{Var}[a] + \mathrm{Var}[b])$, which gives
\begin{align}
     \mathbb{E}_\Xi[\|\tilde{E}(X, \Xi)\|^2]&=\mathrm{Var}[2\sk\left( G X^\top U \right) V X + 2\omega VX(X^\top UX - I_p)] \\
    &\leq 8(\mathrm{Var}[\sk\left( G X^\top U \right) V X] + \omega^2\mathrm{Var}[VX(X^\top UX - I_p)])\\
    &\leq 8\left(\sigma_G^2 p_B^2 \frac{(1+\varepsilon)^2}{\beta_n^2} + \sigma_B^2\frac{1+\varepsilon}{\beta_n}\Delta\left( p_B + \beta_1^2 \right) + \omega^2 \sigma_B^2\left(\frac{1+\varepsilon}{\beta_n}(\sigma_B^2\frac{1+\varepsilon}{\beta_n} + \varepsilon^2)+ \beta_1^2\frac{(1+\varepsilon)^3}{\beta_n^3}\right) \right)\\
    &=8\sigma_G^2p_B^2\frac{(1+\varepsilon)^2}{\beta_n^2} + 8\sigma_B^2\frac{1+\varepsilon}{\beta_n}\left(\Delta\left( p_B + \beta_1^2 \right) + \omega^2\left(\sigma_B^2\frac{1+\varepsilon}{\beta_n} + \varepsilon^2 + \beta_1^2\frac{(1+\varepsilon)^2}{\beta_n^2}\right)\right)\\
    &= \sigma_G^2 \alpha_G + \sigma_B^2(\alpha_B + \omega^2 \gamma_B),
\end{align}
with 
\begin{align}
     \alpha_G  &= 8p_B^2\frac{(1+\varepsilon)^2}{\beta_n^2} \\
     \alpha_B & = 8\frac{1+\varepsilon}{\beta_n}\Delta\left( p_B + \beta_1^2 \right) \\
     \gamma_B &= 8\frac{1+\varepsilon}{\beta_n}\left(\frac{1+\varepsilon}{\beta_n}\sigma_B^2 + \varepsilon^2 + \beta_1^2\frac{(1+\varepsilon)^3}{\beta_n^3}\right). \label{eq:variance_bound_constants}
\end{align}
\end{proof}

\section{Riemannian Interpretation of $\Psi_B^\mathrm{R}(X)$ in Proposition~\ref{prop:generalized_stiefel_relative_descent}}
\label{app:canonical_reldescent}
Similar to the work of \citet{Gao2022Optimization}, we provide a geometric interpretation of the relative ascent direction $\relgrad_B^\mathrm{R}(X)$ as a Riemannian gradient in a metric induced by an isometry 
\[
\Phi_{B,M}: \stiefelG{} \to \stiefelG{B,M}: Y \mapsto B^{-\frac12} Y M^{\frac12}
\]
between the standard Stiefel manifold $\stiefelG{}$ and the doubly generalized Stiefel manifold
\[
\stiefelG{B,M} := \{X: X^\top B X = M\},
\]
for $B,M\succ 0$, which is a layered manifold \citep{Goyens2024Computing} of $h(X) := X^\top B X$.

The map $\Phi_{B,M}$ extends to a diffeomorphism of the set of the full rank $\bR^{n\times p}$ matrices onto itself and maps the standard Stiefel manifold $\stiefelG{}$ to the generalized Stiefel manifold $\stiefelG{B,M}$. The tangent space at $X\in\stiefelG{B,M}$ is the null space of $\mathrm{D}h(X)$:
\begin{align}
    \mathrm{T}_{X}\stiefelG{B,M} &= \{\xi\in\bR^{n\times p}: \xi^T BX+X^TB\xi=0\} \nonumber\\
    &= \{X(X^TBX)^{-1}\Omega+B^{-1}X_{\perp}K: \Omega^T+\Omega=0, \Omega\in\bR^{p\times p}, K\in\bR^{(n-p)\times p}\} \nonumber\\
    &= \{WBX: W^T+W=0, W\in\bR^{n\times n}\} \nonumber\\
    &= \{\Phi_{B,M}(\zeta):\zeta\in\mathrm{T}_{\Phi^{-1}_{B,M}(X)}\stiefelG{}\}, \nonumber
\end{align}
where $X_\perp\in\mathbb{R}^{n\times(n-p)}$ is any matrix such that 
$\mathrm{span}(X_\perp)$ is the orthogonal complement of $\mathrm{span}(X)$.

Consider the canonical metric on the standard Stiefel manifold $\stiefelG{}$:
\[
g_{Y}^{\stiefelG{}}(Z_1, Z_2) = \inner{Z_1}{(I-\frac12 YY^T)Z_2}.
\]
It turns out that the Riemannian gradient of $\tilde{f}:\stiefelG{} \to \mathbb{R}$ is 
\[
\grad \tilde{f}(Y) = 2\,\sk\left(\nabla \tilde{f}(Y) Y^\top\right) Y.
\]

Using the map $\Phi_{B,M}$, we define the metric $g^{\stiefelG{B,M}}$ which makes $\Phi_{B,M}$ an isometry. This metric is given by
\begin{align*}
    g^{\stiefelG{B,M}}_X (\xi,\zeta) &= g^{\stiefelG{}}_{\Phi_{B,M}^{-1}(X)} (\Phi_{B,M}^{-1}(\xi), \Phi_{B,M}^{-1}(\zeta)) \\
    &= \inner{\xi}{(B - \frac12 B X (X^TBX)^{-1} X^T B) \zeta (X^T B X)^{-1}}. 
\end{align*}
This metric extends to arguments $\xi$ and $\zeta$ in $\mathrm{T}_X\mathbb{R}^{n\times p} \simeq \mathbb{R}^{n\times p}$ using the same formula. With respect to this metric, the normal space of $\stiefelG{B,M}$ is  
\begin{align*}
    \mathrm{N}_{X}\stiefelG{B,M} &= \{X(X^TBX)^{-1}S: S^T=S, S\in\bR^{p\times p}\}.
\end{align*}

The form of the derived tangent and normal spaces allow us to derive their projection operators $P_X$ and $P_X^{\perp}$ respectively as
\begin{align*}
    P_X^{\perp}(Y) &= X(X^TBX)^{-1}\sym(X^TBY), \\
    P_X(Y) &= Y - X(X^TBX)^{-1}\sym(X^TBY).
\end{align*}

Since $\Phi_{B,M}$ is a linear isometric, and letting $\Phi_{B,M}^*$ denote the adjoint of $\Phi_{B,M}$ with respect to the Frobenius inner product, the Riemannian gradient w.r.t. $g^{\stiefelG{B,M}}$ can be computed directly by 
\begin{align*}
     \grad_{B,M} f(X) &= \Phi_{B,M} \left(\grad (f\circ\Phi_{B,M})(\Phi_{B,M}^{-1}(X))\right) \\
     &= \Phi_{B,M} \left(2\,\sk\left(\nabla (f\circ\Phi_{B,M})(\Phi_{B,M}^{-1}(X))\right) (\Phi_{B,M}^{-1}(X))^\top\right) \\
     &= \Phi_{B,M} \left(2\,\sk\left(\Phi_{B,M}^* \nabla f(X) (\Phi_{B,M}^{-1}(X))^\top\right) (\Phi_{B,M}^{-1}(X))^\top\right) \\
     &= 2B^{-\frac12} \sk\left(B^{-\frac12} \nabla f(X) M^{\frac12} (B^{\frac12}XM^{-\frac12})^\top\right) B^{\frac12}XM^{-\frac12} M^{\frac12}\\
     &= 2\,\sk(B^{-1}\nabla f(X) X^\top)BX,
\end{align*}
where the second and fourth equalities follow from the Riemannian gradient on the Stiefel manifold and the definition of $\Phi_{B,M}$, respectively. Alternatively, one can check that the obtained expression indeed satisfies the characteristic properties of the gradient, as was done in the proof of~\citet[Proposition~4]{Gao2022Optimization}.

Note that the formula for $\relgrad_B^\mathrm{R}(X)$ involves computing an inverse of $B$ and thus does not allow a simple unbiased estimator to be used in the stochastic case, as opposed to $\relgrad_B(X)$.

\newpage \section{Errata with respect to the ICML 2024 version} 

\begin{enumerate}
    \item{Pages 2 and 7, \citep{Bonnabel2013Stochastic}: The rate of convergence of Riemannian SGD can be found instead in Theorem 5 (section B) of \citep{Zhang2016Riemannian}.}
    \item{Page 2, paragraph above \eqref{eq:optimization_cca}: The formulation follows from eq.\ (1) in \citep{Arora2017Stochastic}. Reformulated the sentence right above \eqref{eq:optimization_cca} to be more precise and in line with the description in the first paragraph of \citep{Arora2017Stochastic}.}
    \item{Page 5, \cref{ass:h_constraint}: The penalty term $\cN(x)$ is $L_\cN$-smooth, not its gradient $\nabla\cN(x)$.}
    \item{Page 7, \cref{prop:gen_stiefel_constants}: Added the value of  $L_\cN$, since it is also mentioned in \cref{ass:h_constraint}, and referred to the relevant lemma for its derivation.}
    \item{Page 7, \autoref{thm:landing_convergence}: replace $x_k$ by $x^k$.}
    \item{Page 7, \autoref{thm:landing_convergence}: In the second displayed equation we deleted $\omega^2$ and in the next line we inserted $\omega^2$ before $C_h^2$.}
    \item{Page 7, \autoref{thm:landing_stochastic_convergence}: We deleted the two occurrences of $\omega^2$ and in the last line of Theorem 2.9, we inserted $\omega^2$ before $C_h^{-2}$.}
    \item{Page 17, section C.5: We deleted the five occurrences of $\omega^2$ and inserted $\omega^2$ before the two occurrences of $C_h^2$ (in (19) and two lines below).}
    \item{Page 18, section C.6: Delete the seven occurrences of $\omega^2$. Then, insert $\omega^2$ before the two occurrences of $C_h^2$ (in the last line of the first displayed equation and in the line above (21)).}
    \item{Page 18, The third line of the first displayed equation: In ``$h(x)$'', replace $x$ by $x^k$.}
\end{enumerate}

\end{document}